\documentclass{article}

\usepackage{microtype}
\usepackage{graphicx}
\usepackage{subcaption}
\usepackage{booktabs} 

\usepackage{hyperref}

\usepackage{amsmath}
\usepackage{amssymb}
\usepackage{mathtools}
\usepackage{amsthm}
\usepackage{multirow}
\usepackage{xcolor}   
\usepackage{colortbl}   
\definecolor{lightblue}{HTML}{E6E6FA}
\definecolor{lightgreen}{HTML}{D4EDDA}
\allowdisplaybreaks
\usepackage[capitalize,noabbrev]{cleveref}
\usepackage[accepted]{icml2026}
\theoremstyle{plain}
\newtheorem{theorem}{Theorem}[section]
\newtheorem{proposition}[theorem]{Proposition}

\newtheorem{corollary}[theorem]{Corollary}
\theoremstyle{definition}

\theoremstyle{remark}

\usepackage[textsize=tiny]{todonotes}
\icmltitlerunning{Boosting Maximum Entropy Reinforcement Learning via One-Step Flow Matching}
\begin{document}

\twocolumn[
  \icmltitle{Boosting Maximum Entropy Reinforcement Learning \\ via One-Step Flow Matching}



  \icmlsetsymbol{equal}{*}

\begin{icmlauthorlist}
\icmlauthor{Zeqiao Li}{yyy}
\icmlauthor{Yijing Wang}{yyy}
\icmlauthor{Haoyu Wang}{yyy}
\icmlauthor{Zheng Li}{yyy}
\icmlauthor{Zhiqiang Zuo}{yyy}

\end{icmlauthorlist}

\icmlaffiliation{yyy}{Tianjin Key Laboratory of Intelligent Unmanned Swarm Technology and System, School of Electrical and Information Engineering, Tianjin University, Tianjin 300072, China}

\icmlcorrespondingauthor{Haoyu Wang}{why2014@tju.edu.cn}

  \vskip 0.3in
]

\printAffiliationsAndNotice{}  

\begin{abstract}
Diffusion policies are expressive yet incur high inference latency. Flow Matching (FM) enables one-step generation, but integrating it into Maximum Entropy Reinforcement Learning (MaxEnt RL) is challenging: the optimal policy is an intractable energy-based distribution, and the efficient log-likelihood estimation required to balance exploration and exploitation suffers from severe discretization bias. We propose \textbf{F}low-based \textbf{L}og-likelihood-\textbf{A}ware \textbf{M}aximum \textbf{E}ntropy RL (\textbf{FLAME}), a principled framework that addresses these challenges. First, we derive a Q-Reweighted FM objective that bypasses partition function estimation via importance reweighting. Second, we design a decoupled entropy estimator that rigorously corrects bias, which enables efficient exploration and brings the policy closer to the optimal MaxEnt policy. Third, we integrate the MeanFlow formulation to achieve expressive and efficient one-step control. Empirical results on MuJoCo show that FLAME outperforms Gaussian baselines and matches multi-step diffusion policies with significantly lower inference cost. Code is available at \url{https://github.com/lzqw/FLAME}.
\end{abstract}

\section{Introduction}
\label{sec:introduction}

Recent advances in continuous control have been driven by expressive generative policy representations. Approaches based on diffusion models (DM)~\cite{DP1-DDPM,DP3-DDIM} and flow matching (FM)~\cite{lipman2023flowmatchinggenerativemodeling,liu2022flow,geng2025mean} parameterize complex action distributions by reversing probability flows in continuous spaces. Compared to unimodal Gaussian policies, these models capture rich multi-modal behaviors and have shown strong empirical performance across diverse control tasks~\cite{DP-APPLI1-chi2024diffusion,DP-POLOCY4-pi0,DP-POLOCY3-Octo}.

Despite their expressiveness, integrating generative policies into online Maximum Entropy Reinforcement Learning (MaxEnt RL) remains challenging. Algorithms such as Soft Actor-Critic (SAC)~\cite{haarnoja2018softactorcriticoffpolicymaximum,haarnoja2019softactorcriticalgorithmsapplications} rely on simple Gaussian parameterizations to ensure tractable policy optimization and entropy regularization. Extending diffusion models to online RL~\cite{ma2025effi,wang2024diffusion,ding2024diffusion} introduces a fundamental bottleneck: action generation requires a large number of function evaluations (NFE), resulting in inference latency that limits high-frequency control. Although distillation-based acceleration technique have been explored~\cite{ding2024consistencymodelsrichefficient}, it typically introduces auxiliary objectives or multi-stage pipelines that complicate training and deployment.

FM enables high-fidelity one-step action generation by learning transport-inspired probability paths that can straighten as the target distribution concentrates~\cite{hu2024adaflow}, making it a practical alternative to multi-step diffusion policies. Meanwhile, online MaxEnt RL offers a principled objective that balances return maximization and exploration via entropy regularization. Bridging the two, however, remains difficult for two reasons. First, flow matching is inherently target-sample driven, whereas MaxEnt policy improvement is defined by an implicit, unnormalized energy-based target induced by a continually evolving critic, so the regression targets required by FM are not directly available online. Second, MaxEnt RL depends on accurate entropy terms, but computing reliable likelihoods for continuous-time flows is expensive in the online loop, and coarse discretizations can introduce bias that destabilizes value learning.

Consequently, existing flow-based RL methods typically obtain fast sampling by avoiding explicit likelihood-based MaxEnt optimization. One line uses surrogate policy-improvement objectives that are compatible with flow training, such as mirror-descent style targets in FPMD~\citep{chen2025onestepflowpolicymirror} or transport-regularized policy search in FlowRL~\citep{lv2025flowbasedpolicyonlinereinforcement}; these yield efficient one-step execution but do not optimize the MaxEnt entropy term through policy likelihood. Another line treats flow policies as expressive actors and fine-tunes them with on-policy surrogates, e.g., ReinFlow~\citep{zhang2025reinflowfinetuningflowmatching}, which improves stability via a discrete-time stochastic formulation yet remains centered on PPO-style objectives rather than general MaxEnt policy iteration. In contrast, MEOW~\citep{chao2024} enables exact likelihood computation by adopting an explicit energy-based normalizing-flow design with tractable normalization, at the cost of additional architectural constraints and energy parameterization

To bridge FM with the MaxEnt RL framework, we propose \textbf{F}low-based \textbf{L}og-likelihood-\textbf{A}ware \textbf{M}aximum \textbf{E}ntropy RL (\textbf{FLAME}). FLAME resolves the tension between expressive flow-based policies and the MaxEnt objective through a principled integration of distribution matching and entropy regularization. For one thing, we bypass the intractability of energy-based targets via a Q-Reweighted objective that learns from value estimates implicitly. For another, we reconcile the conflict between efficient inference and accurate density estimation through a decoupled strategy, ensuring that the exploration benefits of MaxEnt are preserved without incurring deployment latency. Our contributions are summarized as follows:

\textbf{Implicit MaxEnt Policy Learning via Q-Reweighting.} We derive a Q-Reweighted FM objective that leverages reverse sampling to analytically cancel the partition function. This allows implicit MaxEnt policy optimization directly from soft Q-values, without relying on surrogate objectives.

\textbf{Decoupled Entropy Estimation for Efficient Exploration.} We address instability from discretization bias via a decoupled strategy employing multi-step integration during training. This restores accurate entropy regularization, preventing mode collapse and enabling diverse, multi-modal exploration.

\textbf{Expressive and Efficient One-Step Control.} By integrating MeanFlow, FLAME achieves high-fidelity one-step action generation (NFE=1). This effectively bridges the gap between generative policy expressivity and the strict low-latency requirements of real-time control.

\section{Preliminaries} 

\subsection{Maximum Entropy Reinforcement Learning}
We study an infinite-horizon Markov decision process
$\mathcal{M}=(\mathcal{S},\mathcal{A},r,P,\mu_0,\gamma)$, where $\mathcal{S}$ and
$\mathcal{A}$ denote the state and action spaces, $r:\mathcal{S}\times\mathcal{A}\to\mathbb{R}$
is the reward function, $P: \mathcal{S} \times \mathcal{A} \rightarrow \Delta(\mathcal{S})$ is the transition operator with $\Delta(\mathcal{S})$ being the family of distributions over $\mathcal{S}$, $\mu_0 \in \Delta(\mathcal{S})$ is the initial-state distribution, and $\gamma\in(0,1)$ is the discount factor.
Maximum Entropy RL augments return maximization with an entropy bonus and seeks a stochastic policy by
\begin{equation}
\begin{aligned}
\pi^* \!:=& \arg\max_\pi J(\pi) \\
=& \mathbb{E}_\pi \! \Big[ \sum_{\tau=0}^{\infty} \gamma^\tau \big( r(s_\tau, a_\tau) + \alpha \mathcal{H}(\pi(\cdot|s_\tau)) \big) \Big],
\end{aligned}
\end{equation}
where $\mathcal{H}(\pi(\cdot\mid s))=\mathbb{E}_{a\sim\pi(\cdot\mid s)}\!\left[-\log\pi(a\mid s)\right]$
is the entropy, and $\alpha>0$ is a regularization coefficient for the entropy.

A standard approach is soft policy iteration~\cite{haarnoja2018softactorcriticoffpolicymaximum}, which alternates soft policy evaluation and improvement. For soft policy evaluation, it updates the soft Q-function by repeatedly applying the soft Bellman update operator $\mathcal{T}^\pi$ to the current value function $Q: \mathcal{S} \times \mathcal{A} \rightarrow \mathbb{R}$, i.e.,
\begin{equation}
\label{eq: bellman operator}
\mathcal{T}^\pi Q\left(s_\tau, a_\tau\right)=r\left(s_\tau, a_\tau\right)+\gamma \mathbb{E}_{s_{\tau+1} \sim P}\left[V\left(s_{\tau+1}\right)\right]
\end{equation}
where $V\left(s_t\right)=\mathbb{E}_{a_t \sim \pi}\left[Q\left(s_t, a_t\right)-\alpha \log \pi\left(a_t \mid s_t\right)\right]$ is the soft value function.  Given $Q^{\pi_{\text{old}}}(s,a)$ as the converged result of \eqref{eq:weighted_fm_prop}, the improvement step targets the Boltzmann (energy-based) policy
\begin{equation}
\label{eq: max-ent policy}
\begin{split}
\pi_{\mathrm{MaxEnt}}(a\mid s)
&=\frac{\exp\!\left(Q^{\pi_{\text{old}}}(s,a)/\alpha\right)}{Z(s)}, \\
Z(s)&=\int \exp\!\left(Q^{\pi_{\text{old}}}(s,a)/\alpha\right)\,\mathrm{d}a.
\end{split}
\end{equation}
This formulation provides a principled exploration--exploitation trade-off and underpins practical algorithms such as SAC~\cite{haarnoja2019softactorcriticalgorithmsapplications}.

However, the normalizer $Z(s)$ is generally intractable, hence $\pi_{\mathrm{MaxEnt}}$ is an energy-based model (EBM).
In general, an EBM takes the form
\begin{equation}
p(\mathbf{x})=\frac{\exp(-E(\mathbf{x}))}{Z},
\end{equation}
where $Z=\int \exp(-E(\mathbf{x}))\,\mathrm{d}\mathbf{x}$ is the partition function.
In online RL, a common workaround is to project the EBM target onto a tractable policy class.
For instance, SAC parameterizes a Gaussian policy
$\pi_\theta(a \mid s)=\mathcal{N}\left(\mu_{\theta_1}(s), \sigma_{\theta_2}^2(s)\right)$
and updates the parameters $\theta=\left[\theta_1, \theta_2\right]$ by minimizing the  Kullback-Leible (KL) divergence to the MaxEnt target,
$\min_\theta D_{\mathrm{KL}}(\pi_\theta \,\|\, \pi_{\mathrm{MaxEnt}})$.
While effective, this projection restricts expressivity and can yield suboptimal policies when the optimal MaxEnt distribution is multi-modal.

\subsection{Flow Matching and Mean Flow}
\label{sec: Flow Matching and Mean Flow}

FM is a generative framework that learns a continuous-time transport from a source distribution $p_0$ to a target distribution $p_1$ via an ordinary differential equation (ODE)~\citep{lipman2023flowmatchinggenerativemodeling}. In the context of RL, the flow acts on the action space $\mathcal{A} \subseteq \mathbb{R}^d$, where $d$ denotes the dimension of the action, conditioned on the state $s \in \mathcal{S}$.\footnote{In this section, $t\in[0,1]$ denotes the continuous flow time. The intermediate variable is written as an action $a_t$ rather than $x_t$ to match the policy notation used later. The symbol $\tau$ is reserved for discrete RL time steps.}

Formally, consider a time-dependent probability density path $p_t: \mathbb{R}^d \times \mathcal{S} \rightarrow \mathbb{R}_{>0}$ for $t \in [0,1]$. This path is generated by a time-dependent vector field $u_t: \mathbb{R}^d \times \mathcal{S} \rightarrow \mathbb{R}^d$, which defines a flow map $\psi_t: \mathbb{R}^d \times \mathcal{S} \rightarrow \mathbb{R}^d$ via the following ODE:
\begin{align}
    \frac{d}{dt}\psi_t(a_0 \mid s) = u_t(\psi_t(a_0 \mid s) \mid s), \psi_0(a_0 \mid s) = a_0.
    \label{eq:ode}
\end{align}
A flow policy $\pi(\cdot\mid s)$ is defined by sampling $a_0 \sim p_0(\cdot)$ and integrating Eq.~\eqref{eq:ode} to $t=1$, yielding the terminal action $a_1 = \psi_1(a_0 \mid s)$. To learn the dynamics, a neural network $u_\theta(a_t, t, s)$ with parameters $\theta$ is used to approximate the vector field $u_t(a_t \mid s)$. The marginal FM objective $\mathcal{L}_{\mathrm{FM}}(\theta)$ minimizes the regression error against the target vector field $u_t$ over the probability path:
\begin{equation}
    \label{eq:fm_loss}
    \mathcal{L}_{\mathrm{FM}}(\theta)
    = \mathbb{E}_{t, s, a_t} \!\left[ \left\| u_{\theta}(a_t, t, s) - u_t(a_t \mid s) \right\|^2 \right].
\end{equation}
where $t \sim \mathcal{U}[0,1]$, $s \sim \rho$ and  $a_t \sim p_t(\cdot\mid s)$. Directly regressing the marginal vector field is often intractable. In practice, Conditional Flow Matching (CFM) object $\mathcal{L}_{\mathrm{CFM}}(\theta)$ is adopted, which replaces the marginal target with a tractable conditional vector field defined given samples $a_1 \sim \pi(\cdot\mid s)$:
\begin{equation}
    \label{eq:cfm_loss}
    \mathcal{L}_{\mathrm{CFM}}(\theta)
    = \mathbb{E}_{t, s, a_1, a_t} \!\left[
    \left\| 
    u_{\theta}(a_t, t, s) - u_t(a_t \mid a_1) 
    \right\|^2 
    \right],
\end{equation}
We adopt the Optimal-Transport (OT) linear coupling $a_t = t a_1 + (1-t)a_0$, which yields a straight conditional target field:
\begin{equation}
    \label{eq:ot_vector_field}
    u_t(a_t \mid a_1) = a_1 - a_0.
\end{equation}

While the conditional paths are straight, the induced marginal paths may still be curved, requiring multi-step integration. To address this, MeanFlow~\citep{geng2025mean} introduces a theoretical framework to rectify the vector field by learning the average velocity along the flow trajectories. Formally, the average velocity field $\overline{u}_{t,\zeta}$ is defined as:
\begin{equation}
    \label{eq:meanflow_avg}
    \overline{u}_{t,\zeta}(a_t \mid s) \triangleq \frac{1}{t-\zeta}\int_\zeta^t u_\xi(a_\xi \mid a_1) \mathrm{d}\xi,
\end{equation}
where $\zeta \in [0, t)$ is the starting time. It is trained via a variational iteration loss
\begin{equation}
    \label{eq:meanflow_objective}
    \mathcal{L}_{\mathrm{MF}}(\theta)
    = \mathbb{E}_{{t, \zeta, s, a_1, a_t}} \!\left[
    \left\| 
    \overline{u}_\theta(a_t, \zeta, t, s) - \text{sg}(\overline{u}_{\text{tgt}})
    \right\|^2 
    \right],
\end{equation}
where $\text{sg}(\cdot)$ denotes the stop-gradient operator. The target $\overline{u}_{\text{tgt}}$ is derived from the transport equation:
\begin{equation}
    \label{eq:meanflow_target}
    \overline{u}_{\text{tgt}}
    = u_t(a_t \mid a_1) - (t-\zeta)\left( \partial_a \overline{u}_\theta \cdot u_t(a_t \mid a_1) + \partial_t \overline{u}_\theta \right).
\end{equation}
This formulation provides a tractable objective to learn a velocity field with minimal curvature, laying the mathematical foundation for efficient trajectory integration.

\section{Methodology}
\label{sec:methodology}
In this section, we present FLAME, a framework that integrates FM into the online MaxEnt RL paradigm. As defined in Sec.~\ref{sec: Flow Matching and Mean Flow}, our policy $\pi_\theta(a \mid s)$ is given by the terminal distribution of a state-conditioned flow. However, directly optimizing flow policies under the MaxEnt objective encounters two fundamental obstacles. i) Sampling intractability in policy improvement: the MaxEnt policy improvement step yields an energy-based target policy defined in Eq.~\eqref{eq: max-ent policy}, whose normalizer $Z(s)$ is generally intractable. Consequently, direct sampling $a\sim \pi_{\text{new}}(\cdot\mid s)$ is not straightforward, which prevents forming the regression targets required by CFM. 
ii) Density intractability in policy evaluation: entropy regularization in off-policy MaxEnt RL requires computing $\log \pi_\theta(a \mid s)$ for critic targets. For continuous flows, the likelihood computation typically requires numerical integration. However, efficient one-step approximations introduce significant discretization bias, while accurate multi-step integration incurs prohibitive computational overhead during training.

To address these challenges, we first derive a Q-Reweighted FM objective that cancels the partition function via importance reweighting and enables tractable policy improvement without sampling from $\pi_{\text{new}}$. Next, we theoretically quantify the discretization bias in flow-based likelihood estimation and introduce two complementary strategies: FLAME-R employs a rigorous augmented-ODE formulation for unbiased evaluation, while FLAME-M utilizes a novel decoupled multi-step estimator to reconcile accurate entropy estimation with one-step inference. Finally, we instantiate FLAME into two variants, FLAME-R and FLAME-M.

\subsection{Q-Reweighted Flow Policy Learning}
\label{subsec:q_weighted_loss}

The objective of policy improvement is to train the flow policy to approximate the optimal energy-based distribution defined by the current soft Q-function. To achieve this within FM, we first establish the theoretical connection between the intractable marginal objective and the tractable conditional regression.

\begin{proposition}[Gradient Equivalence of FM and CFM]
\label{prop:fm_cfm_equivalence}
Assuming the marginal probability density satisfies $p_t(a \mid s) > 0$ for all $a \in \mathcal{A}$ and $t \in [0, 1]$, the conditional objective $\mathcal{L}_{\mathrm{CFM}}$ (Eq. \ref{eq:cfm_loss}) and the marginal objective $\mathcal{L}_{\mathrm{FM}}$ (Eq. \ref{eq:fm_loss}) are equivalent up to a constant independent of $\theta$:
\begin{equation}
    \mathcal{L}_{\mathrm{CFM}}(\theta) = \mathcal{L}_{\mathrm{FM}}(\theta) + C_1.
\end{equation}
Consequently, their gradients satisfy $\nabla_{\!\theta}\mathcal{L}_{\text{FM}} \!=\! \nabla_{\!\theta}\mathcal{L}_{\text{CFM}}$.
\end{proposition}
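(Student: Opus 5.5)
The plan follows the standard Lipman et al. argument: expand both squared losses, show that the cross terms agree, and collect the leftover $\theta$-independent pieces into $C_1$. Throughout, fix $s\sim\rho$ and $t\sim\mathcal{U}[0,1]$. Write the marginal path as a mixture of conditional paths,
\begin{equation*}
p_t(a\mid s)=\int p_t(a\mid a_1)\,\pi(a_1\mid s)\,\mathrm{d}a_1 ,
\end{equation*}
with $p_t(\cdot\mid a_1)$ the law of $ta_1+(1-t)a_0$ for $a_0\sim p_0$. Define the marginal field as the posterior average of the conditional fields,
\begin{equation*}
u_t(a\mid s)=\int u_t(a\mid a_1)\,\frac{p_t(a\mid a_1)\pi(a_1\mid s)}{p_t(a\mid s)}\,\mathrm{d}a_1 .
\end{equation*}
The assumption $p_t(a\mid s)>0$ is exactly what makes this division legitimate. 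For the OT coupling, $u_t(a\mid a_1)$ is best read as $\mathbb{E}[a_1-a_0\mid a_t=a,a_1]$. That it generates $p_t$ via the continuity equation is taken as the definition of the FM target.

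\textbf{Expanding the losses.} Expand $\|u_\theta-u_t(\cdot\mid s)\|^2=\|u_\theta\|^2-2\langle u_\theta,u_t(\cdot\mid s)\rangle+\|u_t(\cdot\mid s)\|^2$, and do the same for the conditional loss with $u_t(\cdot\mid a_1)$.

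\textbf{Quadratic term.} The term $\mathbb{E}\|u_\theta(a_t,t,s)\|^2$ is identical in both losses. Sampling $a_t\sim p_t(\cdot\mid s)$ has the same law as sampling $a_1\sim\pi$ and then $a_t\sim p_t(\cdot\mid a_1)$, by the mixture identity above.

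\textbf{Cross term.} By Fubini and the definition of the marginal field,
\begin{equation*}
\mathbb{E}_{a_t\sim p_t(\cdot\mid s)}\langle u_\theta,u_t(a_t\mid s)\rangle=\int\!\!\int \langle u_\theta(a,t,s),u_t(a\mid a_1)\rangle\, p_t(a\mid a_1)\pi(a_1\mid s)\,\mathrm{d}a_1\,\mathrm{d}a ,
\end{equation*}
which is precisely the conditional cross term.

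\textbf{Constant and gradients.} The remaining terms, $\mathbb{E}\|u_t(a_t\mid s)\|^2$ and $\mathbb{E}\|u_t(a_t\mid a_1)\|^2$, do not involve $\theta$. Their difference is $C_1$, and by Jensen it is a nonnegative conditional variance. Hence $\mathcal{L}_{\mathrm{CFM}}=\mathcal{L}_{\mathrm{FM}}+C_1$. Differentiating under the integral gives equal gradients, assuming $u_\theta$ is differentiable with integrable square and integrable gradient, which permits dominated convergence.

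\textbf{Main obstacle.} The delicate point is justifying the exchanges of integration and the finiteness of all the expectations. I would assume $p_0$ and $\pi$ have finite second moments and that $u_\theta$ grows at most linearly. Under these assumptions every term is integrable and Fubini applies. A secondary subtlety is that, with $a_0$ explicit, $u_t(a_t\mid a_1)=a_1-a_0$ is not a function of $(a_t,a_1)$ alone at $t=1$; this is a measure-zero time and can be ignored. For $t<1$, $a_0$ is determined by $(a_t,a_1)$, so the conditional field is well defined.
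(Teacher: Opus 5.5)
Your proposal is correct and follows essentially the same route as the paper's proof. Both arguments expand the two squared losses, match the quadratic terms via the mixture identity $p_t(a\mid s)=\int p_t(a\mid a_1)\pi(a_1\mid s)\,\mathrm{d}a_1$, match the cross terms via the posterior-mean representation of $u_t(a\mid s)$ (which is where $p_t(a\mid s)>0$ is used), and collect the $\theta$-independent remainders into $C_1$; your remarks on integrability, on $C_1\ge 0$ as a conditional variance, and on the measure-zero endpoint $t=1$ are sound refinements that the paper leaves implicit.
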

This fundamental connection was first revealed in \citet{lipman2023flowmatchinggenerativemodeling}, and we revisit it here to justify the use of conditional regression targets for policy learning. The proof is provided in Appendix \ref{subsec:derivations_prop}.

This equivalence justifies minimizing the conditional loss to optimize the marginal vector field. However, the MaxEnt target sample $a_1$ is drawn from the energy-based distribution defined in Eq.~\eqref{eq: max-ent policy}, which involves an intractable partition function $Z(s)$. To bypass the intractability of sampling from the energy-based target $\pi_{\text{new}}$, we generalize the standard flow matching objective via importance reweighting. Our approach leverages the insight that the flow matching objective can be integrated against any strictly positive weight function $g(a_t, s)$ without altering the optimal vector field solution. This property is formalized as follows:

\begin{proposition}[Reweighting Invariance]
\label{prop:reweighting_invariance}
Let $g: \mathcal{A} \times \mathcal{S} \to (0, \infty)$ be any strictly positive measurable function. Define the $g$-weighted marginal flow matching objective as:
\begin{equation}
    \label{eq:weighted_fm_prop}
    \mathcal{L}_{\mathrm{FM}}^{g}(\theta) \!\triangleq\! \mathbb{E}_{t,s}\!\!\left[\!\int_{\!\mathcal{A}} g(a_t,s)\big\|u_\theta(a_t,t,s)\!-\!u_t(a_t|s)\big\|^2 \mathrm{d}a_t\!\right]\!.
\end{equation}
Under the realizability assumption, the set of global minimizers of $\mathcal{L}_{\mathrm{FM}}^{g}(\theta)$ is identical to that of the unweighted objective $\mathcal{L}_{\mathrm{FM}}(\theta)$. Specifically, any global minimizer $\theta^*$ satisfies $u_{\theta^*}(a_t, t, s) = u_t(a_t \mid s)$ almost everywhere.
\end{proposition}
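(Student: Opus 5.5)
The plan is to show that both objectives are nonnegative and vanish exactly when the model field matches the target field almost everywhere. Realizability then makes the two sets of global minimizers coincide. We take the realizability assumption to mean that there is some $\theta^\star$ with $u_{\theta^\star}(a_t,t,s)=u_t(a_t\mid s)$ for almost every $(t,s,a_t)$, with respect to the product of $\mathcal{U}[0,1]$, the state distribution $\rho$, and Lebesgue measure on $\mathcal{A}$. We also assume $\mathcal{L}^{g}_{\mathrm{FM}}(\theta^\star)<\infty$, which is immediate here since its integrand is zero.

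\textbf{Step 1 (lower bound and attainment).} Since $g>0$ and the squared norm is nonnegative, $\mathcal{L}^{g}_{\mathrm{FM}}(\theta)\ge 0$ for every $\theta$. The same holds for $\mathcal{L}_{\mathrm{FM}}(\theta)$, written as $\mathbb{E}_{t,s}\big[\int_{\mathcal{A}} p_t(a_t\mid s)\,\|u_\theta-u_t\|^2\,\mathrm{d}a_t\big]$. At $\theta^\star$ both integrands vanish almost everywhere, so both minimum values are exactly $0$. Hence the global minimizers of each objective are precisely the parameters at which that objective equals zero.

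\textbf{Step 2 (characterizing the zero sets).} The key fact is that for a strictly positive weight $w$ and a nonnegative measurable $h$, $\int w h=0$ if and only if $h=0$ almost everywhere. If $\mathcal{L}^{g}_{\mathrm{FM}}(\theta)=0$, then the nonnegative function $(t,s,a)\mapsto g(a,s)\,\|u_\theta(a,t,s)-u_t(a\mid s)\|^2$ has zero integral, so it vanishes almost everywhere (using Tonelli to pass between the iterated and joint integrals). Because $g>0$ pointwise, $\|u_\theta-u_t\|^2=0$ almost everywhere. Conversely, if $u_\theta=u_t$ almost everywhere, the weighted integral is zero. Applying the same argument with weight $p_t(a\mid s)$, which is strictly positive by the assumption of Proposition~\ref{prop:fm_cfm_equivalence}, gives the same characterization for $\mathcal{L}_{\mathrm{FM}}$. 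Both minimizer sets therefore equal $\{\theta : u_\theta=u_t \text{ a.e.}\}$, and the statement follows.

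\textbf{Main obstacle.} The delicate point is measure-theoretic bookkeeping rather than any computation. First, the two objectives integrate against different measures, $g\,\mathrm{d}a$ versus $p_t\,\mathrm{d}a$. These are mutually absolutely continuous with Lebesgue measure only because both weights are strictly positive, so the phrase ``almost everywhere'' means the same thing for both. This is the step where strict positivity of both $g$ and $p_t$ is essential. Second, without realizability the minimizers of the two objectives would generally differ, since different weights trade off errors differently. I would therefore state clearly that the claim concerns exact minimizers attaining the value zero, and would add the mild integrability condition that $\mathcal{L}^{g}_{\mathrm{FM}}$ is finite at $\theta^\star$.
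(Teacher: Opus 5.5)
Your proof is correct and follows the same route as the paper: the weighted loss is nonnegative, realizability forces the minimum value to be zero, and strict positivity of the weight turns a zero integral into $u_\theta=u_t$ almost everywhere. Your version is slightly more careful than the paper's, which fixes $(t,s)$ without aggregating over them and compares against the weight $g\equiv 1$ rather than the $p_t$-weighted $\mathcal{L}_{\mathrm{FM}}$ actually defined in the paper; your use of Tonelli on the joint measure and of the strict positivity of $p_t$ closes exactly those two small gaps.
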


Leveraging this invariance, we construct a specific weighting function $g$ to eliminate the intractable partition function $Z(s)$. We first rewrite the marginal objective in Eq.~\eqref{eq:weighted_fm_prop} into its conditional form by decomposing the marginal density $p_t(a_t \mid s) = \int p_t(a_t \mid a_1) \pi_{\text{new}}(a_1 \mid s) \mathrm{d}a_1$:
\begin{equation}
\begin{aligned}
    \mathcal{L}_{\mathrm{FM}}^g(\theta) &= \iint \frac{g(a_t, s)}{p_t(a_t \mid s)} \, p_t(a_t \mid a_1) \, \pi_{\text{new}}(a_1 \mid s) \\
    &\quad \times \left\| u_{\theta}(a_t, t, s) - u_t(a_t \mid a_1) \right\|^2 \mathrm{d}a_1 \mathrm{d}a_t + C_2,
\end{aligned}
\end{equation}
where $C_2$ is a constant independent of $\theta$. We select the weighting function $g$ specifically to cancel both the partition function in $\pi_{\text{new}}$ and the intractable marginal density:
\begin{equation}
    g^{\text{MaxEnt}}(a_t, s) = h_t(a_t \mid s) \, Z(s) \, p_t(a_t \mid s),
\end{equation}
where $h_t(a_t \mid s)$ is a tractable proposal distribution (e.g., uniform over $\mathcal{A}$ or the previous policy $\pi_{\text{old}}$) with full support on $\mathcal{A}$. Substituting $g^{\text{MaxEnt}}$ and Eq.~\eqref{eq: max-ent policy} into the generalized loss cancels both $Z(s)$ and $p_t(a_t \mid s)$, yielding:
\begin{equation}
\begin{aligned}
    \mathcal{L}^{g}(\theta) &\propto \iint h_t(a_t \mid s) \, p_t(a_t \mid a_1) \, \exp\left(\frac{Q(s, a_1)}{\alpha}\right) \\
    &\quad \times \left\| u_{\theta}(a_t, t, s) - u_t(a_t \mid a_1) \right\|^2 \mathrm{d}a_1 \mathrm{d}a_t.
\end{aligned}
\end{equation}

Throughout, we exclude the endpoint $t=0$ and sample $t \sim \mathcal{U}[\varepsilon,1]$ (e.g., $\varepsilon=10^{-3}$) for numerical stability, since the reverse kernel involves factors of $1/t$.
To render the expectation over $p_t(a_t \mid a_1)$ tractable without sampling from the intractable $\pi_{\text{new}}$, we apply the Reverse Sampling Trick.
For the OT probability path $a_t = t a_1 + (1-t) a_0$ with Gaussian base $a_0 \sim p_0=\mathcal{N}(0,I)$,
the forward transition kernel is
$p_t(a_t \mid a_1) = \mathcal{N}(a_t \mid t a_1, (1-t)^2 I)$.
Define the reverse conditional distribution
\begin{equation}
    \phi_{1|t}(a_1 \mid a_t) = \mathcal{N}\left(a_1 \bigg| \frac{a_t}{t}, \frac{(1-t)^2}{t^2} I\right).
\end{equation}
By comparing the Gaussian normalizing constants, one has the exact density relation 
\begin{equation}
    p_t(a_t \mid a_1) = t^{-d} \, \phi_{1|t}(a_1 \mid a_t), \qquad d=\dim(\mathcal{A}). \label{eq:pt_phi_relation}
\end{equation}
The time-only factor $t^{-d}$ is independent of $\theta$ and can be absorbed into the time weighting / sampling distribution; we omit it for clarity (see Appendix~\ref{subsec:derivations_flame_r}). 

Instead of sampling $a_1$ from $\pi_{\text{new}}$, we first draw an intermediate action $a_t \sim h_t(\cdot\mid s)$, and then reverse-sample a candidate terminal action $a_1 \sim \phi_{1|t}(\cdot\mid a_t)$. 

The final tractable Q-Reweighted Flow Matching (QRFM) objective is:
\begin{equation}
\label{eq:qwfm_loss}
\begin{aligned}
    \mathcal{L}_{\text{QRFM}}(\theta) = \mathbb{E}\!_{\substack{t \sim \mathcal{U}[\varepsilon,1] \\ a_t \sim h_t(\cdot|s) \\ a_1 \sim \phi_{1|t}(\cdot|a_t)}} \bigg[ & \exp\left(\frac{Q(s,a_1)}{\alpha}\right) \\
    \times& \big\|u_{\theta}(a_t,t,s) - (a_1-a_0)\big\|^2 \bigg].
\end{aligned}
\end{equation}

where $a_0 = (a_t - t a_1)/(1-t)$ is deterministically recovered.

For the FLAME-M, we apply the same reweighting strategy to the variational MeanFlow objective (Eq.~\eqref{eq:meanflow_objective}), yielding the Q-Reweighted MeanFlow (QRMF) objective:
\begin{equation}
\label{eq:mpmd_loss}
\begin{aligned}
    \mathcal{L}_{\text{QRMF}}(\theta) = \mathbb{E}_{\substack{t \sim \mathcal{U}[\varepsilon,1] \\ a_t \sim h_t(\cdot\mid s) \\ a_1 \sim \phi_{1|t}(\cdot\mid a_t)}} \bigg[ & \exp\left(\frac{Q(s, a_1)}{\alpha}\right) \\
     \times& \left\| \overline{u}_{\theta}(a_t, \zeta, t, s) - \text{sg}(\overline{u}_{\text{tgt}}) \right\|^2 \bigg].
\end{aligned}
\end{equation}
where $\overline{u}_{\text{tgt}}$ is defined in Eq.~\eqref{eq:meanflow_target}. 
Detailed derivations for both FLAME-R and FLAME-M are provided in Appendix~\ref{subsec:derivations_flame_r} and Appendix~\ref{subsec:derivations_flame_m}, respectively.

\subsection{Entropy Regularization via Continuous Flows}
\label{subsec:log_likelihood}

To incorporate flow-based policies into the MaxEnt RL framework, the critic update requires the entropy term $\alpha \log \pi(a|s)$. Since $\pi_\theta$ is defined implicitly by a continuous flow, we compute $\log \pi(a_1|s)$ by integrating the negative divergence of the velocity field along the trajectory via the augmented dynamics~\cite{grathwohl2018}:
\begin{equation}
    \label{eq:aug_ode}
    \frac{d}{dt} \begin{bmatrix} a_t \\ \ell_t \end{bmatrix} = \begin{bmatrix} u_{\theta}(a_t, t, s) \\ -\text{Tr}\left( \frac{\partial u_{\theta}(a_t, t, s)}{\partial a_t} \right) \end{bmatrix}, \quad z(0) = \begin{bmatrix} a_0 \\ \log p_0(a_0) \end{bmatrix},
\end{equation}
where $\ell_t$ denotes the accumulated log-density change. Integrating from $t=0$ to $1$ yields the terminal log-likelihood $\log \pi(a_1|s)=\ell_1$. We employ Hutchinson's estimator to reduce the computational cost of the trace operator ($\text{Tr}$) from $O(d^2)$ to $O(d)$, where $d$ represents the action dimension.

For FLAME-M, a naive one-step log-likelihood approximation $\Delta \log p \approx -\text{Tr}(\nabla_a \overline{u}_\theta)$ ignores higher-order flow curvature terms, introducing significant discretization bias (as formalized in Proposition~\ref{prop:single_step_bias}). To reconcile efficient inference with accurate regularization, we propose a decoupled strategy where the actor and critic operate with distinct integration schemes. Specifically, the actor generates actions in a single step via $a_1 = a_0 + \overline{u}_\theta(a_0, 0, 1, s)$ to ensure minimal interaction latency. In contrast, the critic solves the augmented ODE (Eq.~\ref{eq:aug_ode}) by partitioning the interval $[0, 1]$ into $N_{\text{est}}$ sub-intervals. For each integration step $[t, t+\Delta t]$, the displacement and log-density change are determined by the local average velocity $\overline{u}_\theta(a_t, t, t+\Delta t, s)$. This approach restores the precision of entropy regularization for stable exploration while confining the computational overhead of multi-step integration strictly to the training phase.

\begin{proposition}[Discretization Error in Density Estimation]
\label{prop:single_step_bias}
Let the true log-density change be $\Delta \log p_{\text{true}}$ and the single-step estimator be $\Delta \log p_{\text{est}} = -\text{Tr}(J_{\overline{u}})$, where $J_{\overline{u}} = \nabla_a \overline{u}_\theta$. The discretization error $\mathcal{E}_{\text{single}}$ is dominated by the squared norm of the Jacobian \citep{benton2024error}:
\begin{equation}
    \mathcal{E}_{\text{single}} = \left| \Delta \log p_{\text{true}} - \Delta \log p_{\text{est}} \right| \approx \frac{1}{2} \text{Tr}(J_{\overline{u}}^2).
\end{equation}
\end{proposition}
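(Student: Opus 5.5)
The plan is to compare the exact change-of-variables formula for the one-step MeanFlow map with its first-order (trace) approximation. For fixed $s$, write the single-step map as $\Phi(a_0)=a_0+\overline{u}_\theta(a_0,0,1,s)$. If $\Phi$ is a diffeomorphism, which holds whenever $\|J_{\overline{u}}\|<1$, then the density of $a_1=\Phi(a_0)$ satisfies
\begin{equation*}
\log \pi(a_1\mid s)=\log p_0(a_0)-\log\left|\det\left(I+J_{\overline{u}}\right)\right|.
\end{equation*}
We therefore identify $\Delta\log p_{\text{true}}=-\log|\det(I+J_{\overline{u}})|$. This is the quantity the exact augmented ODE (Eq.~\ref{eq:aug_ode}) would produce when the learned average velocity reproduces the flow exactly. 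The single-step estimator $-\text{Tr}(J_{\overline{u}})$ is then precisely the first-order truncation of this log-determinant. The whole statement reduces to a matrix expansion of $\log\det(I+J)$.

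First, I would use $\log\det(I+J)=\text{Tr}\log(I+J)$. This identity is valid for $\|J\|<1$ through the Jordan form, or via the eigenvalues $\lambda_i$ of $J$, since $\sum_i\log(1+\lambda_i)=\text{Tr}\log(I+J)$. Second, I would expand the matrix logarithm as a power series:
\begin{equation*}
\log(I+J)=J-\tfrac12 J^2+\tfrac13 J^3-\cdots.
\end{equation*}
This gives
\begin{equation*}
\Delta\log p_{\text{true}}=-\text{Tr}(J_{\overline{u}})+\tfrac12\text{Tr}(J_{\overline{u}}^2)+O\!\left(\|J_{\overline{u}}\|^3\right).
\end{equation*}
Subtracting the estimator leaves
\begin{equation*}
\mathcal{E}_{\text{single}}=\left|\tfrac12\text{Tr}(J_{\overline{u}}^2)+R_3\right|.
\end{equation*}
Third, I would bound the remainder: $|R_3|\le \sum_{k\ge3} d\|J\|^k/k\le d\|J\|^3/(3(1-\|J\|))$. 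The quadratic term therefore dominates in the regime where $\|J_{\overline{u}}\|$ is small, which is the regime the statement's "$\approx$" refers to.

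I would also connect this to the continuous-time picture, citing \citet{benton2024error} for the corresponding flow error analysis. Along an exact ODE with velocity $u$, the log-density change is $-\int_0^1\text{Tr}(\nabla u_t)\,dt$, and a one-step Euler evaluation misses the curvature correction. Expanding $\det(I+J)$ as $1+\text{Tr}J+\tfrac12[(\text{Tr}J)^2-\text{Tr}(J^2)]+\cdots$ exhibits the same second-order discrepancy.

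The main obstacle is pinning down what "$\Delta\log p_{\text{true}}$" means so that the comparison is honest. If it is the exact Jacobian determinant of the one-step map, the argument above is rigorous up to $O(\|J\|^3)$. If instead it is the exact continuous-time density change of the underlying flow, there is an additional error from the mismatch between $\overline{u}_\theta$ and the true average velocity, and from the time-variation of $\nabla u_t$. I would handle this by assuming the MeanFlow identity holds exactly, so that $\Phi$ equals the true flow map $\psi_1$, which collapses the second interpretation onto the first. I would state the result as a leading-order asymptotic with an explicit cubic remainder rather than an equality.
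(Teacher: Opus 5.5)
Your proposal follows the paper's proof step for step: the change-of-variables formula for the one-step map $a_0\mapsto a_0+\overline{u}_\theta(a_0,0,1,s)$, the identity $\log\det(I+J)=\mathrm{Tr}\log(I+J)$ with $J=J_{\overline{u}}$, the series $\log(I+X)=X-\tfrac12X^2+O(X^3)$, and reading $-\mathrm{Tr}(J)$ as the first-order truncation. Your explicit remainder bound $d\|J\|^3/(3(1-\|J\|))$ makes the argument tighter than the paper's bare $O(\|J\|^3)$; however, your claim that the quadratic term dominates for small $\|J\|$ (a step the paper also takes) tacitly assumes $|\mathrm{Tr}(J^2)|$ is of order $\|J\|^2$, as it is for symmetric $J$, because $\mathrm{Tr}(J^2)=\sum_i\lambda_i^2$ can be negative or cancel to zero when $J$ has rotational parts, which is why your absolute-value form $|\tfrac12\mathrm{Tr}(J^2)+R_3|$ is the honest statement.
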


This proposition implies a hard error bound for one-step estimation scaling with $O(\|J_{\overline{u}}\|^2)$. To resolve this for FLAME-M, a decoupled strategy is adopted: actions are generated in a single step ($N_{\text{gen}}=1$) for efficient interaction, while the log-likelihood is computed by solving the augmented ODE for the average velocity field $\overline{u}_\theta$ using a finer grid ($N_{\text{est}}=5$ in all MuJoCo experiments) during the critic update. This suppresses the error as derived below:

\begin{corollary}[Multi-Step Error Suppression]
\label{cor:multi_step_error}
By discretizing the integration path of the average velocity field into $N_{\text{est}}$ substeps, the cumulative log-likelihood error $\mathcal{E}_{\text{multi}}$ is reduced linearly with respect to $N_{\text{est}}$:
\begin{equation}
    \mathcal{E}_{\text{multi}} \approx \frac{1}{2N_{\text{est}}}\text{Tr}(J_{\overline{u}}^2).
\end{equation}
\end{corollary}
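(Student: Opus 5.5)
The plan is to treat the $N_{\text{est}}$-step estimator as $N_{\text{est}}$ applications of the single-step analysis in Proposition~\ref{prop:single_step_bias}, each on a subinterval of length $\Delta t = 1/N_{\text{est}}$, and then sum the local errors.

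\textbf{Local error on one subinterval.} On a subinterval $[t_k, t_k+\Delta t]$, the displacement is $\Delta t\,\overline{u}_\theta(a_{t_k}, t_k, t_k+\Delta t, s)$. The local Jacobian of the update map is therefore $I + \Delta t\, J_k$, where $J_k = \nabla_a \overline{u}_\theta(a_{t_k}, t_k, t_k+\Delta t, s)$. The exact log-density change of this step is $-\log\det(I+\Delta t J_k)$. Expanding via $\log\det(I+X) = \text{Tr}(X) - \tfrac12 \text{Tr}(X^2) + O(\|X\|^3)$ gives
\begin{equation*}
-\log\det(I+\Delta t J_k) = -\Delta t\,\text{Tr}(J_k) + \tfrac{\Delta t^2}{2}\,\text{Tr}(J_k^2) + O(\Delta t^3\|J_k\|^3).
\end{equation*}
The estimator keeps only the first-order term $-\Delta t\,\text{Tr}(J_k)$. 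The per-step error is thus $\tfrac{\Delta t^2}{2}\text{Tr}(J_k^2)$ up to higher order. This is exactly the mechanism behind Proposition~\ref{prop:single_step_bias}, rescaled by $\Delta t^2$.

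\textbf{Summing the local errors.} Across $N_{\text{est}}$ steps, the total error is at most $\sum_k \tfrac{\Delta t^2}{2}|\text{Tr}(J_k^2)|$. Assume the Jacobians along the trajectory are comparable to the single-step Jacobian, $\text{Tr}(J_k^2) \approx \text{Tr}(J_{\overline{u}}^2)$. This holds, for instance, if $\overline{u}_\theta$ is smooth in its time arguments, so the $J_k$ are roughly constant, or after replacing them by their average. The sum is then
\begin{equation*}
N_{\text{est}}\cdot \frac{1}{2N_{\text{est}}^2}\,\text{Tr}(J_{\overline{u}}^2) = \frac{1}{2N_{\text{est}}}\,\text{Tr}(J_{\overline{u}}^2),
\end{equation*}
which is the claimed rate.

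\textbf{Main obstacle.} The hard step is justifying that local errors add linearly rather than compounding. The log-density is additive along the trajectory, so the total log-likelihood error is exactly the sum of the per-step $\log\det$ truncation errors, provided each step uses the exact state $a_{t_k}$. What could break this is the state discretization error: the estimated $a_{t_k}$ may drift from the true trajectory, and $J_k$ would then be evaluated at the wrong point. I would first argue that, for the average-velocity parameterization, the step map $a \mapsto a + \Delta t\,\overline{u}_\theta(a, t_k, t_k+\Delta t, s)$ is exact by definition of $\overline{u}$ (Eq.~\eqref{eq:meanflow_avg}) when $\overline{u}_\theta$ is realized. In that case the states carry no discretization error, and only the trace truncation remains. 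Otherwise I would invoke a Lipschitz/Gr\"onwall bound showing that the state drift contributes only higher-order terms, which the ``$\approx$'' absorbs.
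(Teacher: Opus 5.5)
Your proposal follows the paper's argument essentially step for step: the per-substep Jacobian $I+\Delta t\,J_k$, the second-order truncation error $\tfrac{1}{2}\text{Tr}((\Delta t\,J_k)^2)$, summation over the $N_{\text{est}}$ substeps, and the same heuristic assumption $\text{Tr}(J_k^2)\approx\text{Tr}(J_{\overline{u}}^2)$, which together yield $\frac{1}{2N_{\text{est}}}\text{Tr}(J_{\overline{u}}^2)$. Your extra remark is also sound, and the paper leaves it implicit: when $\overline{u}_\theta$ is exact, the composed average-velocity steps introduce no state drift, so the per-step errors add exactly by the chain rule.
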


This strategy ensures that the entropy estimate used for the critic update is sufficiently accurate to maintain stable training, while keeping environment interaction strictly one-step (NFE=1) since multi-step integration is performed only in the critic update.

\subsection{Practical Implementation and Algorithms}
\label{subsec:practical_and_algos}

Translating the theoretical framework of FLAME into a stable online algorithm requires addressing numerical instability in importance sampling and enforcing physical constraints on generated actions.

\textbf{Numerical Stability via LogSumExp.}
The Q-weighted objectives in Eq.~\eqref{eq:qwfm_loss} and Eq.~\eqref{eq:mpmd_loss} involve importance weights proportional to $\exp(Q(s,a)/\alpha)$, which may cause numerical overflow when Q-values have large magnitudes.
To mitigate this, we employ the LogSumExp trick to compute normalized importance weights (equivalent to a softmax distribution).
For a batch of $K$ candidates $\{\hat a_1^i\}_{i=1}^K$, let $q_i = Q(s,\hat a_1^i)/\alpha$ and $\mathrm{LSE}(\mathbf{q}) = \log\sum_{j=1}^K \exp(q_j)$. We compute the weights as:
\begin{equation}
    w_i = \exp\left( q_i - \mathrm{LSE}(\mathbf{q}) \right).
\end{equation}
This formulation guarantees that $\sum_{i=1}^K w_i = 1$ and $w_i \in (0,1]$, effectively implementing Self-Normalized Importance Sampling (SNIS).
Crucially, this normalization prevents numerical instability caused by exploding exponential terms and stabilizes the gradient magnitude across batches, ensuring robust optimization.

\textbf{Boundary Constrained Reverse Sampling.} 
To respect the bounded action space and prevent value overestimation from out-of-distribution queries, we sample the noise for the reverse kernel from a truncated normal distribution. The truncation limits are derived from the environment's action boundaries, ensuring that the reconstructed candidates $\hat{a}_1$ fall strictly within the valid support. This constraint prevents querying the critic on undefined actions, thereby stabilizing the policy improvement step.

The unified training procedure for both FLAME-R and FLAME-M is summarized in Algorithm \ref{alg:flame_unified}.
\begin{algorithm}[tb]
\caption{FLAME: Q-Reweighted Flow Matching for MaxEnt RL}
\label{alg:flame_unified}
\begin{algorithmic}[1]
\REQUIRE Initial policy $\pi_\theta$, Q-function $Q_\phi$, Temperature $\alpha$, Replay buffer $\mathcal{D} = \emptyset$.
\FOR{iteration $k = 1, 2, \dots$}
    \STATE Interact with env using $\pi_\theta$ (ODE for FLAME-R, One-step for FLAME-M) and update $\mathcal{D}$
    \STATE Sample batch $\mathcal{B} = \{(s, a, r, s')\} \sim \mathcal{D}$
    \STATE Sample next action $a'$ from $\pi_\theta(\cdot|s')$ (Flow Sampling)
    \STATE \textbf{Critic learning:} Estimate $\log \pi_\theta(a'|s')$ (Unbiased for R; Decoupled for M) and update $Q_\phi$
    \STATE \textbf{Actor learning:} Update $\pi_\theta$ via Q-reweighted Flow Matching:
    \STATE \hspace{0.5em} $\left\{
    \begin{aligned}
    \textbf{FLAME-R: } & \text{Sample } t \sim \mathcal{U}, a_t, \text{truncated noise } \epsilon. \\
    & \text{Update } \theta \text{ by } \mathcal{L}_{\text{QRFM}} \text{ (Eq.~\ref{eq:qwfm_loss})} \\
    \textbf{FLAME-M: } & \text{Sample } t \sim \mathcal{U}, a_t, \text{truncated noise } \epsilon. \\
    & \text{Update } \theta \text{ by } \mathcal{L}_{\text{QRMF}} \text{ (Eq.~\ref{eq:mpmd_loss})}
    \end{aligned}
    \right.$
    \STATE \textbf{Temperature:} Update $\alpha$ towards target entropy $\mathcal{H}_0$
\ENDFOR
\end{algorithmic}
\end{algorithm}

\begin{table*}[!t]
\centering
\caption{Performance on OpenAI Gym MuJoCo environments (v5). We compare FLAME against classic RL, diffusion policies, and other flow-based methods. Results are mean $\pm$ std over 5 seeds. \colorbox{lightblue}{Blue} highlights the best overall performance. 32 denotes the number of candidate actions mentioned in Section \ref{app:implementation_details}.}
\label{tab:full_results_stacked}
\tiny
\setlength{\tabcolsep}{4pt}

\begin{tabular}{llrrrrr}
\toprule
& & \textsc{HalfCheetah} & \textsc{Reacher} & \textsc{Humanoid} & \textsc{Pusher} & \textsc{InvPendulum} \\
\midrule

\multirow{3}{*}{\begin{tabular}{@{}c@{}}Classic\\ Model-Free RL\end{tabular}} 
& PPO (NFE=1) & $4631.15 \pm 345.60$ & $-6.85 \pm 0.12$ & $1023.24 \pm 155.20$ & $-28.12 \pm 1.50$ & \cellcolor{lightblue}$1000.00 \pm 0.00$ \\
& TD3 (NFE=1) & $7942.14 \pm 569.13$ & $-3.55 \pm 0.16$ & $5824.23 \pm 257.20$ & $-26.17 \pm 2.13$ & \cellcolor{lightblue}$1000.00 \pm 0.00$ \\
& SAC (NFE=1) & $11379.74 \pm 474.51$ & $-67.29 \pm 0.05$ & $4623.49 \pm 48.50$ & $-30.52 \pm 0.19$ & \cellcolor{lightblue}$1000.00 \pm 0.00$ \\
\midrule

\multirow{6}{*}{\begin{tabular}{@{}c@{}}Diffusion\\ Policy RL\end{tabular}} 
& DIPO (NFE=20) & $9105.59 \pm 6.06$ & $-3.57 \pm 0.08$ & $5311.28 \pm 73.65$ & $-34.40 \pm 1.47$ & \cellcolor{lightblue}$1000.00 \pm 0.00$ \\
& DACER (NFE=20) & $11089.40 \pm 86.40$ & $-3.76 \pm 0.02$ & $3388.63 \pm 2118.66$ & $-30.91 \pm 0.44$ & \cellcolor{lightblue}$1000.00 \pm 0.00$ \\
& QSM (NFE=20$\times$32) & $10227.18 \pm 688.22$ & $-4.19 \pm 0.02$ & $5200.17 \pm 186.66$ & $-75.01 \pm 0.43$ & $810.07 \pm 122.98$ \\
& QVPO (NFE=20$\times$32) & $7755.91 \pm 1645.55$ & $-30.70 \pm 34.21$ & $428.24 \pm 70.85$ & $-130.43 \pm 0.57$ & \cellcolor{lightblue}$1000.00 \pm 0.00$ \\
& SDAC (NFE=20$\times$32) & $11165.70 \pm 164.25$ & $-7.59 \pm 0.07$ & $5590.20 \pm 480.51$ & $-38.10 \pm 0.15$ & \cellcolor{lightblue}$1000.00 \pm 0.00$ \\
& DPMD (NFE=20$\times$32) & $11041.47 \pm 557.77$ & $-3.18 \pm 0.06$ & $6323.65 \pm 210.26$ & $-30.71 \pm 1.06$ & \cellcolor{lightblue}$1000.00 \pm 0.00$ \\
\midrule

\multirow{6}{*}{\begin{tabular}{@{}c@{}}Flow\\ Policy RL\end{tabular}} 
& FPMD-R (NFE=1) & $9766.20 \pm 103.40$ & $-3.30 \pm 0.16$ & $6010.18 \pm 138.17$ & $-37.36 \pm 2.48$ & \cellcolor{lightblue}$1000.00 \pm 0.00$ \\
& FPMD-M (NFE=1) & $9156.31 \pm 294.50$ & $-3.37 \pm 0.03$ & $5893.78 \pm 310.06$ & $-27.39 \pm 1.11$ & \cellcolor{lightblue}$1000.00 \pm 0.00$ \\
& FlowRL (NFE=1) & $6977.13 \pm 1034.07$ & $-3.94 \pm 0.74$ & $5159.71 \pm 263.41$ & $-38.04 \pm 9.81$ & \cellcolor{lightblue}$1000.00 \pm 0.00$ \\
& MEOW (NFE=1)& $10132.20 \pm 1846.90$ & $-4.40 \pm 0.10$ & $5479.30 \pm 454.70$ & $-28.70 \pm 1.60$ & \cellcolor{lightblue}$1000.00 \pm 0.00$ \\
\cmidrule{2-7} 
& \textbf{FLAME-R (Ours)} & \cellcolor{lightblue}$11945.08 \pm 461.47$ & \cellcolor{lightblue}$-3.16 \pm 0.13$ & \cellcolor{lightblue}$6987.11 \pm 64.71$ & $-30.11 \pm 0.66$ & \cellcolor{lightblue}$1000.00 \pm 0.00$ \\
& \textbf{FLAME-M (Ours)} & $10600.36 \pm 1003.33$ & $-3.54 \pm 0.01$ & $6301.28 \pm 105.74$ & \cellcolor{lightblue}$-22.70 \pm 1.25$ & \cellcolor{lightblue}$1000.00 \pm 0.00$ \\
\bottomrule
\toprule 

& & \textsc{Ant} & \textsc{Hopper} & \textsc{Swimmer} & \textsc{Walker2d} & \textsc{InvDoublePendulum} \\
\midrule

\multirow{3}{*}{\begin{tabular}{@{}c@{}}Classic\\ Model-Free RL\end{tabular}} 
& PPO (NFE=1) & $3442.50 \pm 851.00$ & $3227.40 \pm 164.00$ & $84.50 \pm 12.40$ & $4114.20 \pm 806.00$ & $9358.00 \pm 1.00$ \\
& TD3 (NFE=1) & $3733.60 \pm 133.60$ & $1934.10 \pm 107.90$ & $71.90 \pm 1.50$ & $2476.50 \pm 135.70$ & \cellcolor{lightblue}$9360.00 \pm 0.00$ \\
& SAC (NFE=1) & $4615.28 \pm 16.56$ & $2525.27 \pm 1016.81$ & $105.13 \pm 5.43$ & $3892.49 \pm 1990.29$ & $9023.00 \pm 14.00$ \\
\midrule

\multirow{6}{*}{\begin{tabular}{@{}c@{}}Diffusion\\ Policy RL\end{tabular}} 
& DIPO (NFE=20) & $1291.87 \pm 711.75$ & $731.02 \pm 613.29$ & $46.76 \pm 2.76$ & $2726.00 \pm 1060.33$ & $9051.55 \pm 37.55$ \\
& DACER (NFE=20) & $4307.53 \pm 269.60$ & $3138.25 \pm 485.55$ & $98.66 \pm 51.65$ & $3135.81 \pm 2186.75$ & $6294.69 \pm 70.81$ \\
& QSM (NFE=20$\times$32) & $542.52 \pm 125.39$ & $3192.68 \pm 665.71$ & $59.95 \pm 28.07$ & $1785.40 \pm 986.17$ & $2472.77 \pm 22.96$ \\
& QVPO (NFE=20$\times$32) & $2145.44 \pm 1972.56$ & $2150.44 \pm 11.23$ & $48.74 \pm 5.17$ & $350.89 \pm 496.66$ & $9350.00 \pm 5.00$ \\
& SDAC (NFE=20$\times$32) & $4528.43 \pm 98.39$ & $1438.44 \pm 771.43$ & $59.09 \pm 0.32$ & $4195.03 \pm 85.31$ & $9173.00 \pm 57.00$ \\
& DPMD (NFE=20$\times$32) & $4964.22 \pm 282.86$ & $3229.02 \pm 4.53$ & $76.01 \pm 0.10$ & $4364.94 \pm 227.55$ & \cellcolor{lightblue}$9360.00 \pm 0.00$ \\
\midrule

\multirow{6}{*}{\begin{tabular}{@{}c@{}}Flow\\ Policy RL\end{tabular}} 
& FPMD-R (NFE=1) & $5332.70 \pm 162.66$ & $3185.47 \pm 3.61$ & $60.92 \pm 0.38$ & $3389.89 \pm 237.25$ & $9354.00 \pm 1.44$ \\
& FPMD-M (NFE=1) & $5429.42 \pm 606.87$ & $2046.67 \pm 491.84$ & $55.46 \pm 2.67$ & $4044.61 \pm 321.25$ & $9351.00 \pm 4.00$ \\
& FlowRL (NFE=1) & $5498.69 \pm 945.36$ & $1922.11 \pm 734.83$ & $90.15 \pm 17.88$ & $4536.44 \pm 533.63$ & $9207.51 \pm 93.36$ \\
& MEOW (NFE=1)& $5608.30 \pm 233.60$ & $2918.60 \pm 154.50$ & $41.20 \pm 1.80$ & $4355.10 \pm 55.50$ & $8840.00 \pm 9.00$ \\
\cmidrule{2-7} 
& \textbf{FLAME-R (Ours)} & \cellcolor{lightblue}$6121.97 \pm 245.09$ & \cellcolor{lightblue}$3327.34 \pm 179.38$ & \cellcolor{lightblue}$146.11 \pm 9.28$ & $3154.73 \pm 944.73$ & $9359.00 \pm 1.00$ \\
& \textbf{FLAME-M (Ours)} & $5191.92 \pm 501.00$ & $2074.26 \pm 1710.42$ & $57.61 \pm 1.34$ & \cellcolor{lightblue}$5236.50 \pm 317.45$ & \cellcolor{lightblue}$9360.00 \pm 0.00$ \\
\bottomrule
\end{tabular}
\end{table*}

\section{Experiments}
\label{sec:experiments}

We evaluate FLAME on standard continuous control benchmarks to validate its effectiveness, focusing on three primary objectives: demonstrating that FLAME achieves state-of-the-art performance with one-step inference comparable to multi-step diffusion models; verifying the necessity of the proposed decoupled entropy estimation (FLAME-M) and ODE integration (FLAME-R) for stability; and showcasing its capability to capture complex multi-modal distributions beyond Gaussian limitations. These aspects are addressed in Secs.~\ref{subsec:multimodal_vis}--\ref{subsec:efficiency} and Appendix~\ref{app:ablation}.

\subsection{Experimental Setup}
We evaluate our method on 10 standard MuJoCo-v5 continuous control benchmarks, ranging from tasks with specific dynamics like \textsc{InvertedPendulum} to complex high-dimensional locomotion tasks such as \textsc{Humanoid}, \textsc{Ant}, and \textsc{Walker2d}. Implemented within the JAX framework, our models are trained for 200K iterations (1 million interactions) on most tasks, with the training budget doubled to 400K iterations (2 million interactions) for the more challenging \textsc{Humanoid} environment. Specific implementation details and hyperparameters for FLAME are provided in Appendix \ref{app:implementation_details} and \ref{app:Hyperparameterssettings}, respectively.

We compare FLAME against a comprehensive set of baselines spanning three categories: classic model-free algorithms with single-step inference (SAC~\citep{haarnoja2019softactorcriticalgorithmsapplications}, PPO~\citep{schulman2017proximalpolicyoptimizationalgorithms}, TD3~\citep{pmlr-v80-fujimoto18a}), diffusion-based methods that typically require expensive iterative sampling (DIPO~\citep{yang2023policyrepresentationdiffusionprobability}, DACER~\citep{wang2024diffusion}, QSM~\citep{psenka2025learningdiffusionmodelpolicy}, QVPO~\citep{ding2024diffusion}, SDAC~\citep{ma2025effi}, DPMD~\citep{ma2025effi}), and recent flow-based approaches targeting efficient inference (FPMD~\citep{chen2025onestepflowpolicymirror}, FlowRL~\citep{lv2025flowbasedpolicyonlinereinforcement}, MEOW~\citep{chao2024}). Further details are provided in Appendix \ref{app:baseline_details}. Unless otherwise specified, we use $N_{\text{est}}=5$ integration steps for log-likelihood estimation across all MuJoCo tasks. A more comprehensive sensitivity analysis of key hyperparameters (e.g., $K$, $N_{\text{est}}$) is provided in Appendix~\ref{app:ablation}.

\begin{figure*}[t]
    \centering
    \setlength{\tabcolsep}{1pt} 
    \renewcommand{\arraystretch}{0.5} 
    
    \newcommand{\incplot}[1]{\includegraphics[width=0.163\linewidth]{#1}}

    \begin{tabular}{cccccc}
        \incplot{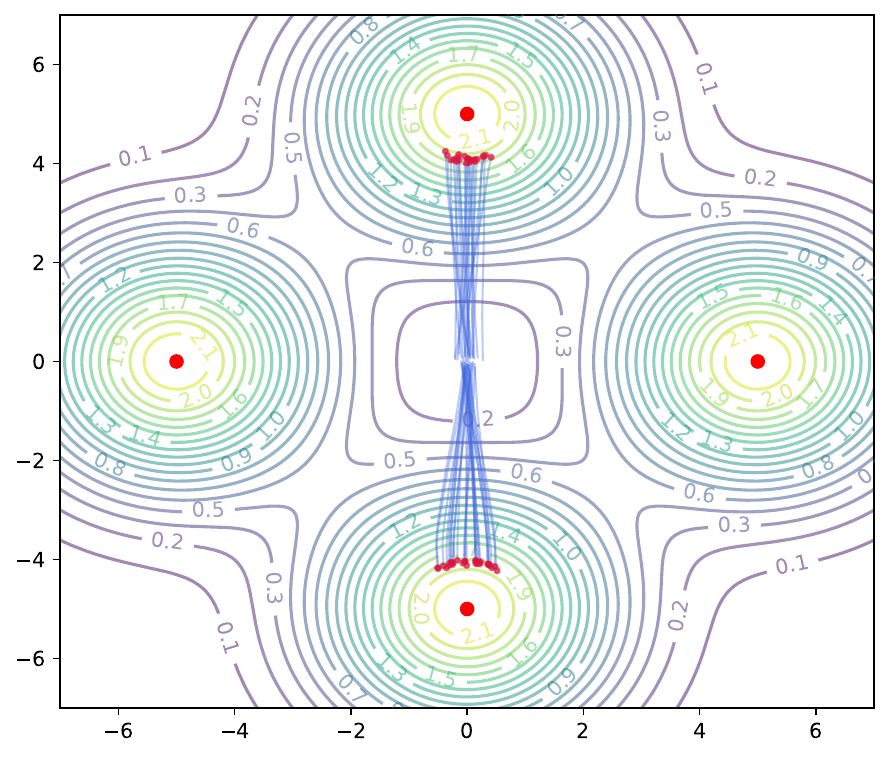} & 
        \incplot{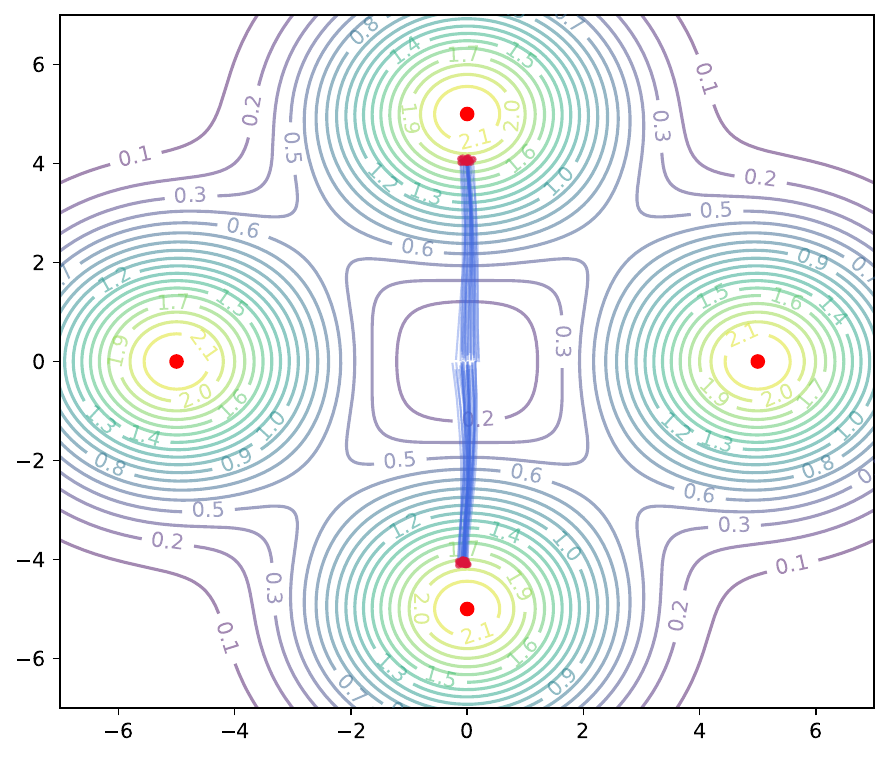} & 
        \incplot{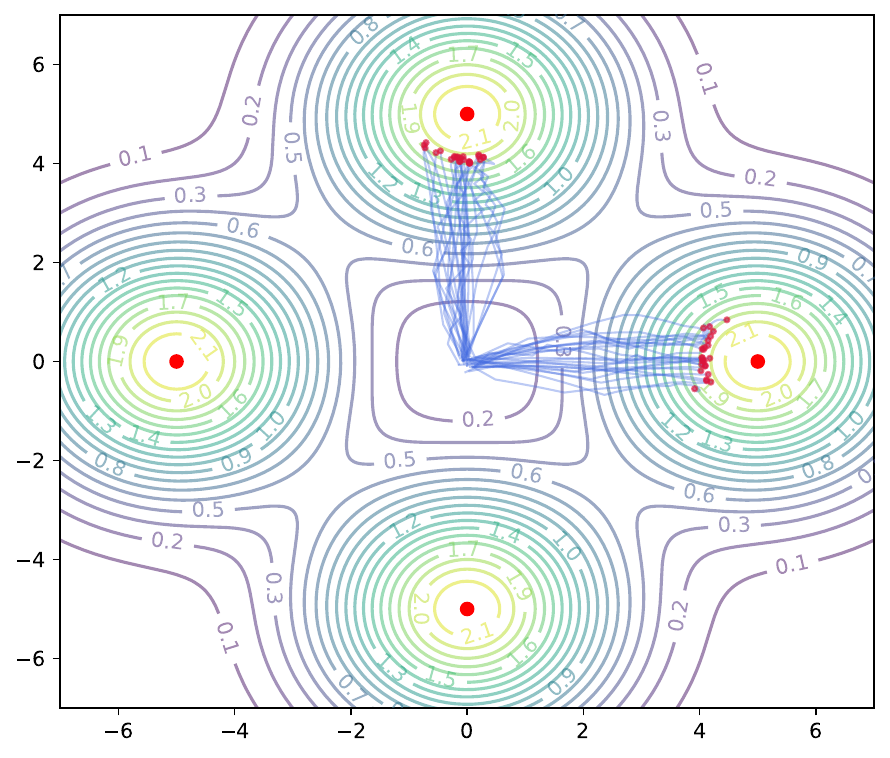} & 
        \incplot{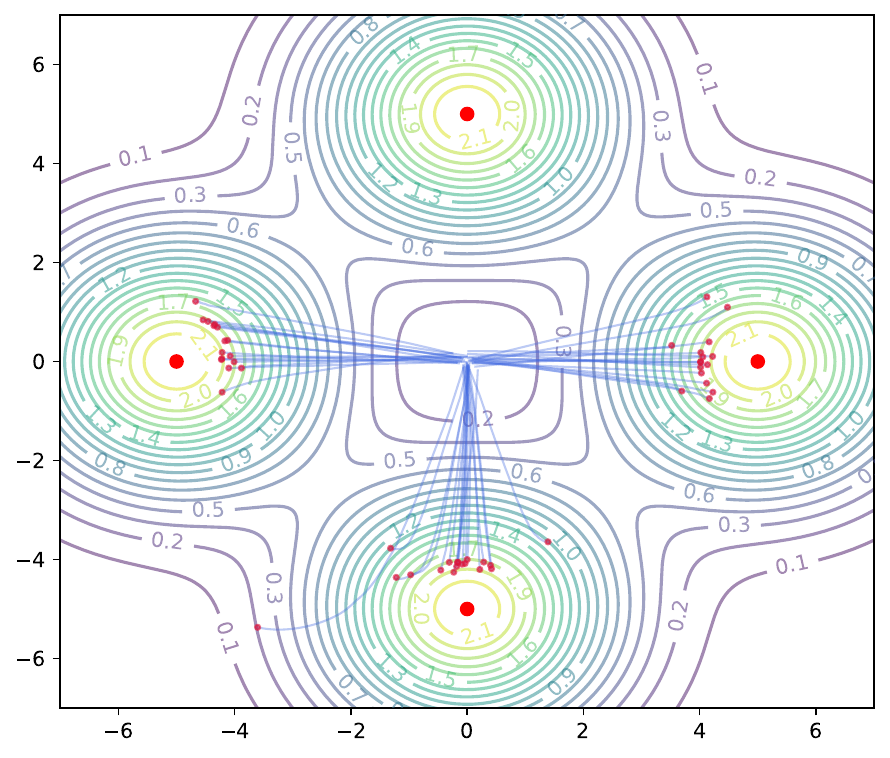} & 
        \incplot{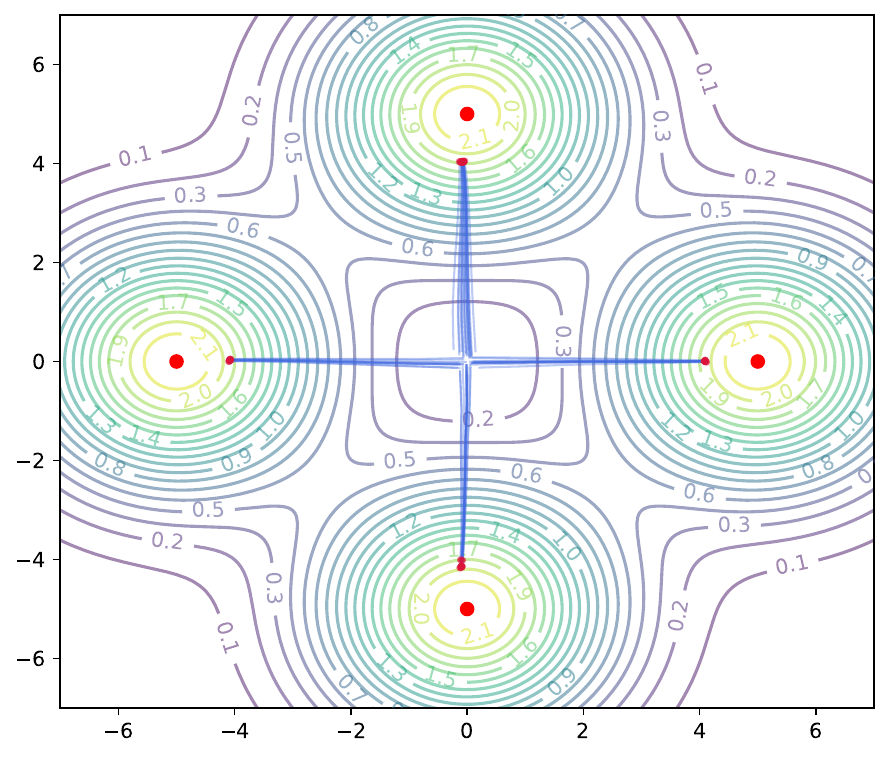} &
        \incplot{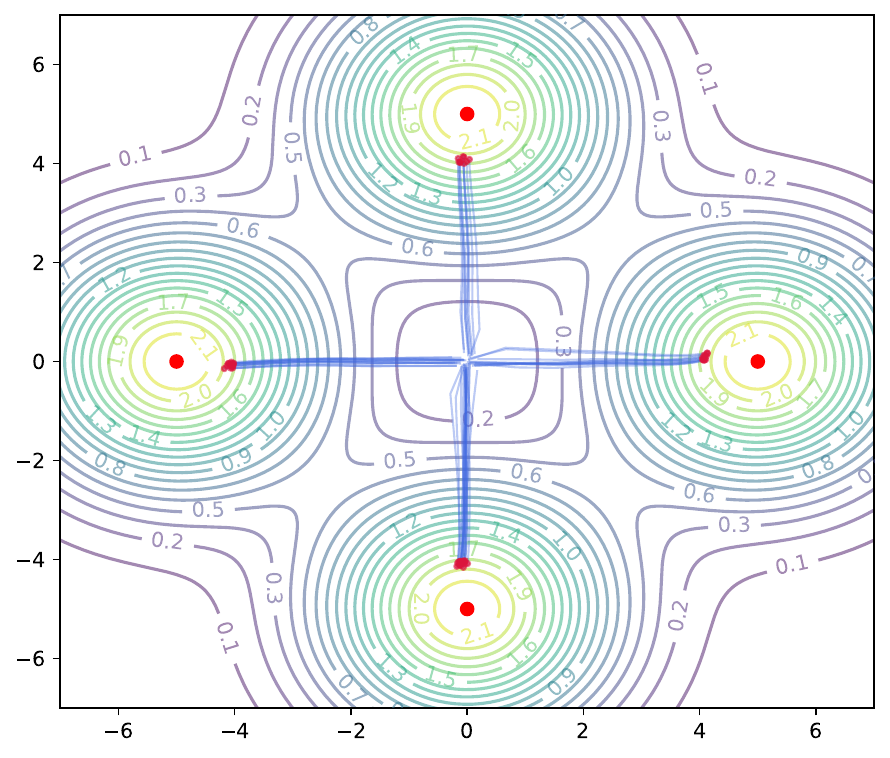} \\
        
        \incplot{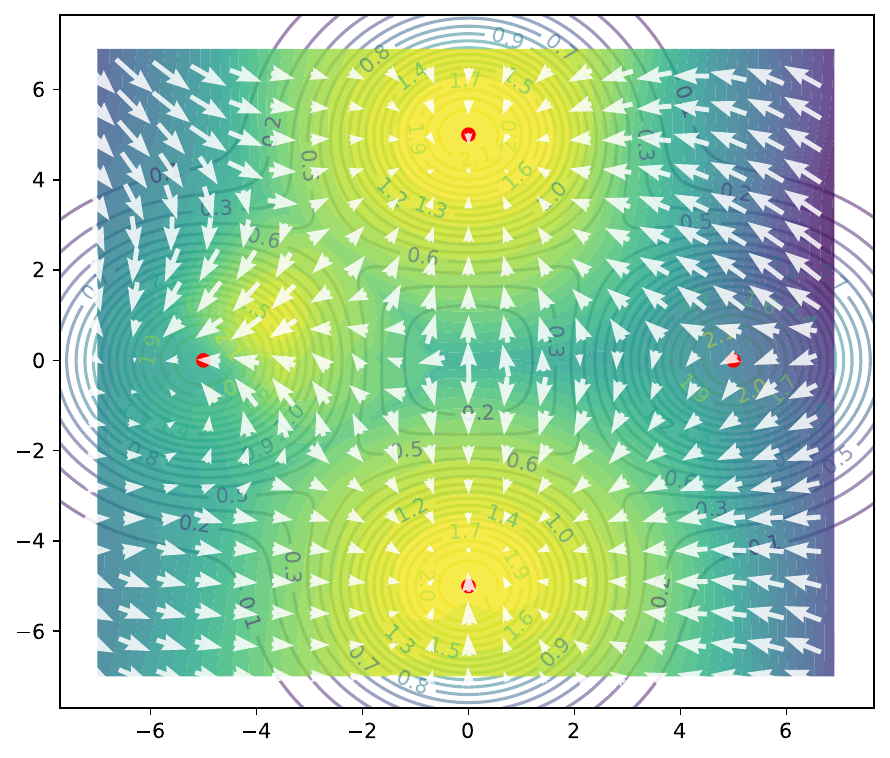} & 
        \incplot{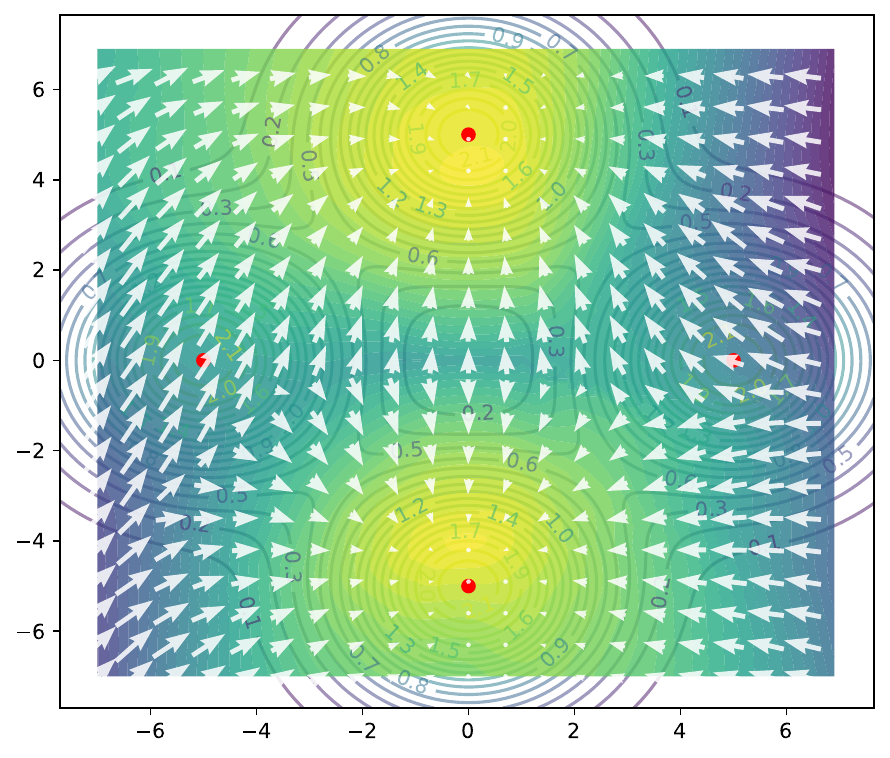} & 
        \incplot{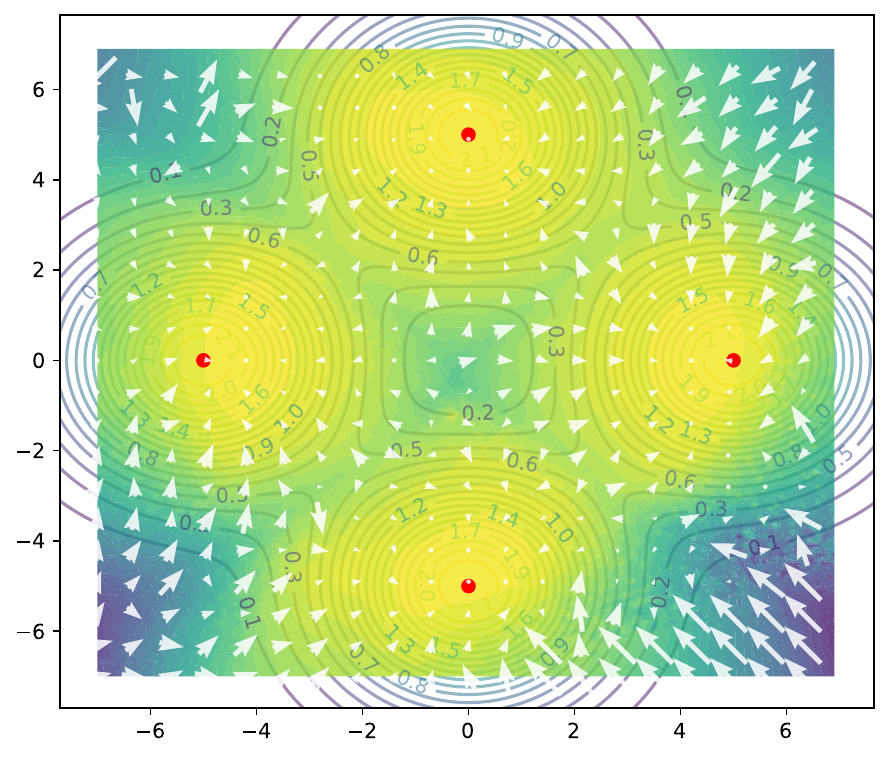} & 
        \incplot{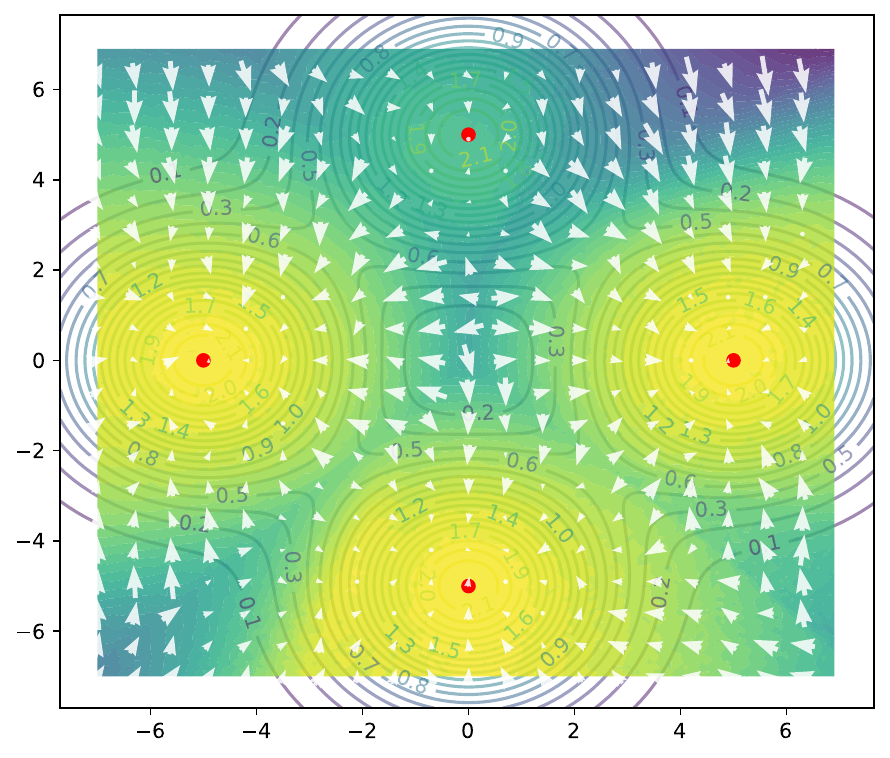} & 
        \incplot{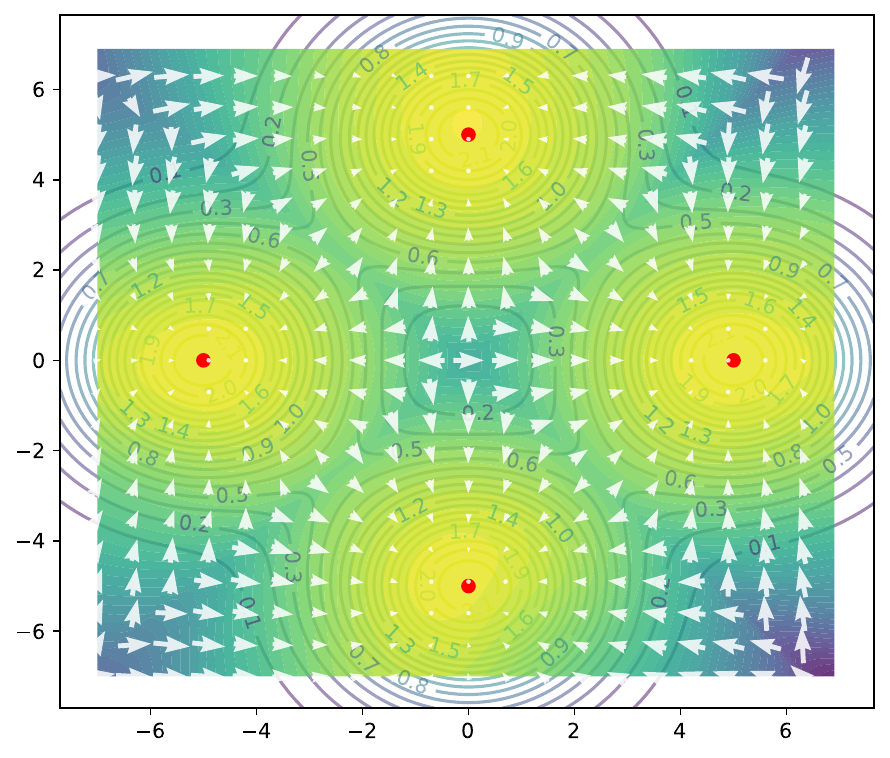} &
        \incplot{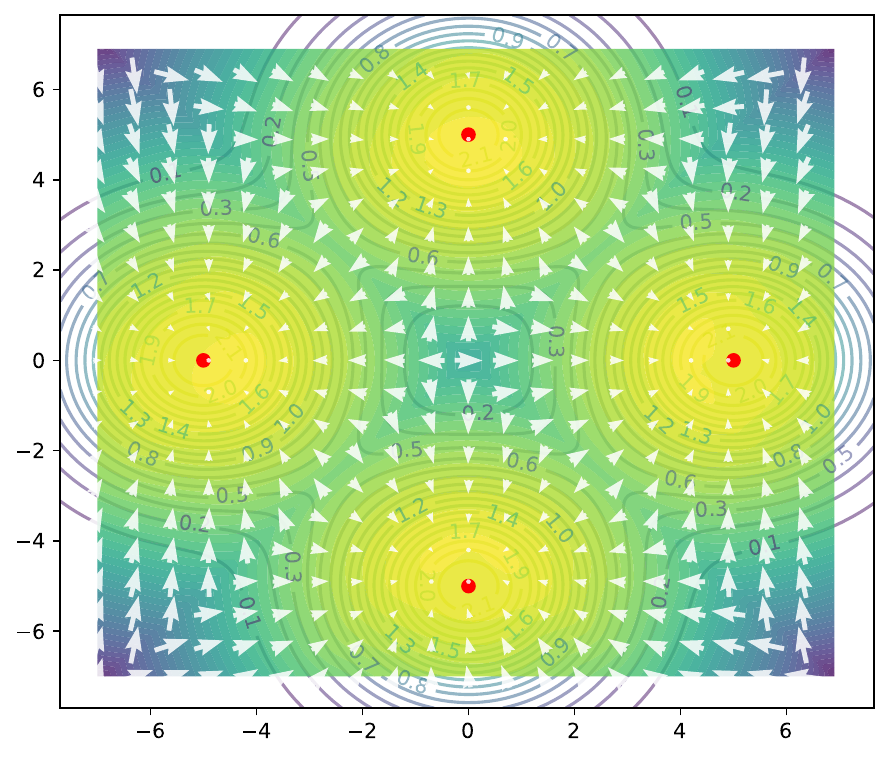} \\
        
        \scriptsize (a) SAC & 
        \scriptsize (b) SDAC & 
        \scriptsize (c) FPMD-R & 
        \scriptsize (d) FLAME (w/o Ent) & 
        \scriptsize (e) \textbf{FLAME-R} &
        \scriptsize (f) \textbf{FLAME-M} \\
    \end{tabular}
    
\caption{\textbf{MultiGoal multimodality.} Gaussian policies (SAC) and methods without entropy regulation collapse to one goal, while \textbf{FLAME-R/M} cover all four symmetric goals, demonstrating stable MaxEnt exploration.}

    \label{fig:multimodal_vis}
\end{figure*}

\begin{figure}[t]
    \centering
    \begin{subfigure}{0.47\linewidth}
        \centering
        \includegraphics[width=\linewidth]{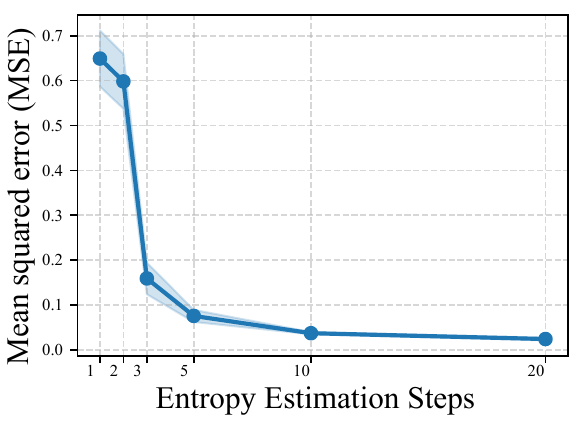} 
        \caption{FLAME-R MSE Curve}
    \end{subfigure}
    \hfill
    \begin{subfigure}{0.50\linewidth}
        \centering
        \includegraphics[width=\linewidth]{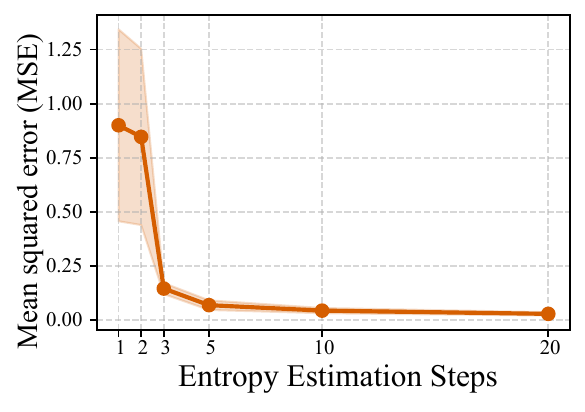} 
        \caption{FLAME-M MSE Curve}
    \end{subfigure}
    \vspace{-0.5em}
    \caption{\textbf{Log-Likelihood Estimation Error vs. Integration Steps ($N_{\text{est}}$).} 
    The Mean Squared Error (MSE) drops significantly as $N_{\text{est}}$ increases. The convergence at $N_{\text{est}}=5$ validates our choice of multi-step estimation for the critic to reduce bias.}
    \label{fig:entropy_mse_curves}
    \vspace{-1.5em} 
\end{figure}

\begin{figure*}[t]
    \centering
    \setlength{\tabcolsep}{1pt} 
    \renewcommand{\arraystretch}{0.5} 

    \begin{tabular}{cccccc}
        \includegraphics[width=0.16\linewidth]{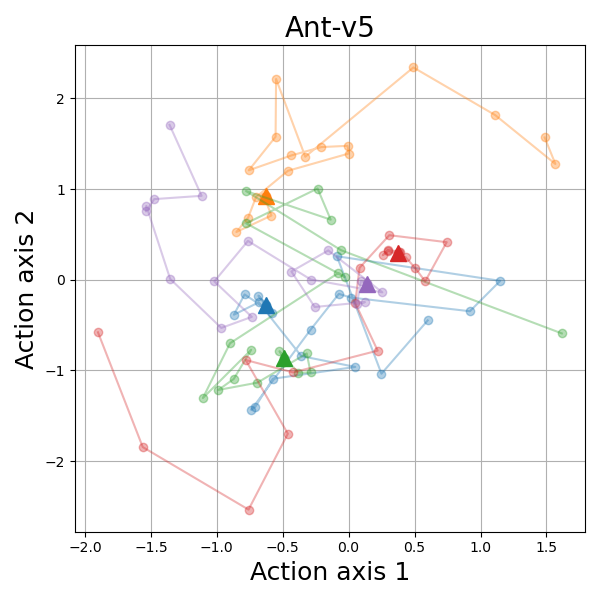} & 
        \includegraphics[width=0.16\linewidth]{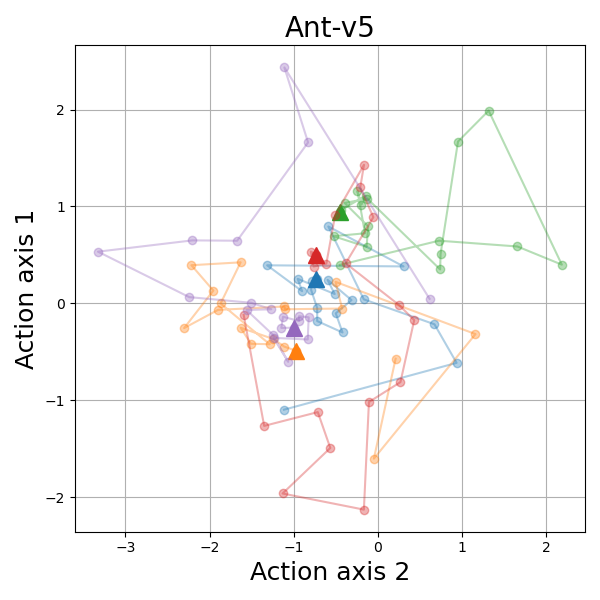} &
        \includegraphics[width=0.16\linewidth]{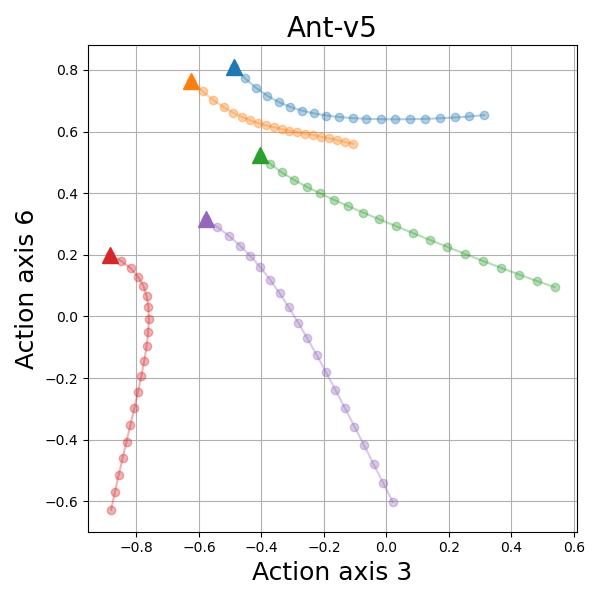} & 
        \includegraphics[width=0.16\linewidth]{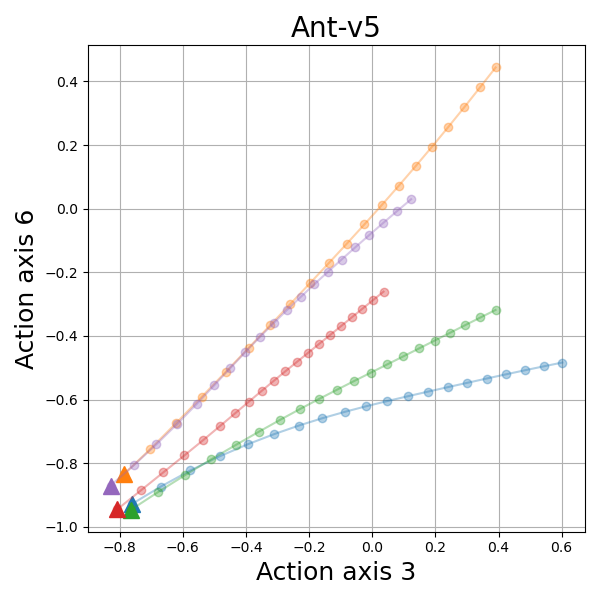} &
        \includegraphics[width=0.16\linewidth]{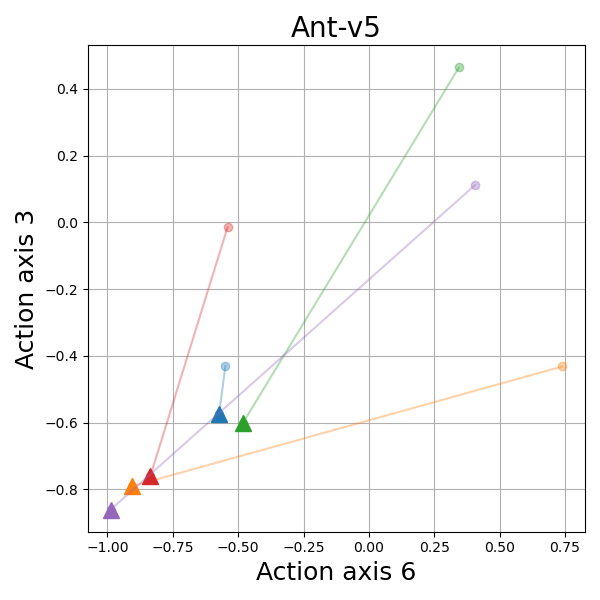} & 
        \includegraphics[width=0.16\linewidth]{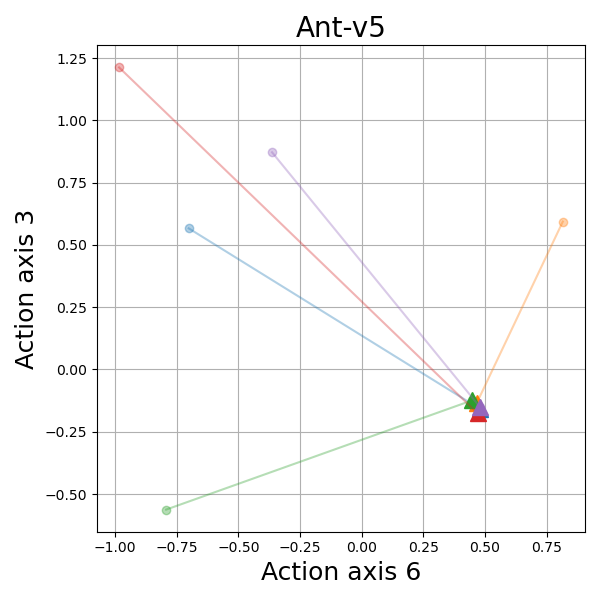} \\
        
        \scriptsize SDAC (5K) & \scriptsize SDAC (200K) & 
        \scriptsize FLAME-R (5K) & \scriptsize FLAME-R (200K) & 
        \scriptsize FLAME-M (5K) & \scriptsize FLAME-M (200K) \\
    \end{tabular}
    
    \caption{\textbf{Evolution of sampling trajectories.} 
    From left to right: Policies at 5K (early) and 200K (converged) iterations.
    \textbf{SDAC} retains curvature, requiring multi-step sampling. 
    \textbf{FLAME-R} gradually straightens the flow, enabling one-step inference at convergence. 
    \textbf{FLAME-M} achieves straight trajectories almost immediately (5K), demonstrating superior training efficiency.}
    \label{fig:traj_evolution}
\end{figure*}

\subsection{Performance on MuJoCo Benchmarks}

Table~\ref{tab:full_results_stacked} shows that FLAME effectively bridges the gap between expressivity and efficiency. In complex high-dimensional environments like \textsc{Ant} and \textsc{Humanoid}, FLAME-R consistently surpasses standard Gaussian policies, highlighting the advantage of flow-based representations in capturing complex dynamics that limit unimodal distributions. The full training curves are presented in Appendix~\ref{app:training_curves}.

Crucially, FLAME achieves these results with a single inference step, matching or exceeding the performance of multi-step diffusion baselines. Notably, on \textsc{HalfCheetah}, FLAME-R surpasses both the strong diffusion baseline SDAC and DPMD. Similarly, the FLAME-M demonstrates robust performance on \textsc{Walker2d}, outperforming comparable methods. These results indicate that the proposed Q-reweighted flow matching objective effectively aligns the policy distribution with high-value regions while enabling efficient one-step generation.

Furthermore, FLAME consistently outperforms other flow-based baselines. On challenging tasks such as \textsc{Humanoid}, FLAME-R leads FlowRL by a substantial margin, underscoring the benefit of rigorously optimizing the MaxEnt objective. Unlike approaches relying on implicit or approximate entropy handling, FLAME explicitly accounts for log-likelihood estimation, enabling a more principled exploration-exploitation trade-off.

FLAME-R and FLAME-M offer complementary design trade-offs between theoretical rigor and deployment efficiency. FLAME-R prioritizes stability by employing unbiased entropy estimation via standard augmented ODE integration, yielding consistent low-variance performance. In contrast, FLAME-M adopts the decoupled strategy to mitigate discretization bias while strictly maintaining one-step inference. As shown in Table~\ref{tab:full_results_stacked}, FLAME-M remains highly competitive, even surpassing FLAME-R on tasks like \textsc{Walker2d} and \textsc{Pusher}, demonstrating that the decoupled approximation effectively balances regularization accuracy with minimal latency. In practice, FLAME-R serves as a robust default, while FLAME-M is ideal for latency-constrained applications. 

We also evaluate FLAME on visual RL tasks (Appendix~\ref{app:visual_rl}) to demonstrate its scalability to high-dimensional observations.

\subsection{Bias-Variance Analysis of Entropy Estimation}
\label{subsec:entropy_analysis}

To quantify the discretization bias discussed in Proposition \ref{prop:single_step_bias}, we conducted a controlled experiment on a 2D Gaussian Mixture Model (GMM). We trained both FLAME-R and FLAME-M policies to match a 4-mode target distribution and evaluated the Mean Squared Error (MSE) between the estimated log-likelihood and the ground truth density across varying integration steps $N_{\text{est}}$.

Figure \ref{fig:entropy_mse_curves} presents the quantitative results. For both FLAME-R and FLAME-M, the naive single-step estimator ($N_{\text{est}}=1$) yields a high MSE, indicating a significant deviation from the true entropy. This confirms that while one-step generation is feasible for actions, it is insufficient for the precise density estimation required by the critic. However, the error decays rapidly. At $N_{\text{est}}=5$, the MSE effectively converges, offering a rigorous approximation with minimal computational overhead. Consequently, we adopt $N_{\text{est}}=5$ for the critic's entropy estimation in our main algorithms, balancing numerical precision with training efficiency. Detailed visualizations of the density reconstruction at different steps are provided in Appendix \ref{app:toy_details}.

\subsection{Policy Representation on Multimodal Environment}
\label{subsec:multimodal_vis}
To investigate multimodal representation, we train all methods from scratch on a MultiGoal environment with four symmetric goals (Figure \ref{fig:multimodal_vis}). Gaussian policies and generative policies without entropy regulation (SDAC, FLAME w/o Ent) exhibit mode collapse, converging to a single goal, whereas both FLAME-R and FLAME-M cover all four modes. This supports RQ3 and indicates that accurate entropy estimation is necessary for stable exploration under the MaxEnt objective.

\subsection{Visual Analysis of Training Evolution}
\label{subsec:traj_vis}
We visualize the evolution of sampling trajectories in Figure \ref{fig:traj_evolution}. SDAC retains high curvature even at convergence ($200\mathrm{K}$), necessitating multi-step sampling. FLAME-R gradually straightens the flow, reducing truncation error for one-step inference. FLAME-M achieves straight trajectories almost immediately ($5\mathrm{K}$), demonstrating efficient training dynamics for one-step generation.
\begin{table}[]
    \centering
    \caption{\textbf{Efficiency Comparison.} We report the NFE, inference time (ms), frames per second (FPS), and training time (min). FLAME achieves comparable inference speed to SAC while maintaining competitive training times among generative policies.}
    \label{tab:inference_time}
    \vspace{2pt}
    \resizebox{\linewidth}{!}{
        \begin{tabular}{lccccc}
            \toprule
            \textbf{Algorithm} & \textbf{Type} & \textbf{NFE} & \textbf{Infer. (ms)} $\downarrow$ & \textbf{FPS} $\uparrow$ & \textbf{Train Time (min)} $\downarrow$ \\
            \midrule
            SAC   & Gaussian & 1 & $\mathbf{0.037}$ & 26869 &  26 \\
            \midrule
            SDAC  & Diffusion & 20 & $0.884$ & 1130 & 63 \\
            QVPO  & Diffusion & 20 & $0.874$ & 1144 & 88 \\
            DACER & Diffusion & 20 & $0.428$ & 2334 & 102 \\
            QSM   & Diffusion & 20 & $0.384$ & 2602 & 71 \\
            DIPO  & Diffusion & 20 & $0.245$ & 4083 & 121 \\
            \midrule
            \textbf{FLAME-R} & Flow & \textbf{1} & $0.134$ & 7475 & 72 \\
            \textbf{FLAME-M} & Flow & \textbf{1} & $0.085$ & 11682 & 107 \\
            \bottomrule
        \end{tabular}
    }
    \vspace{-20pt}
\end{table}
\subsection{Efficiency and Deployment}
\label{subsec:efficiency}
Table~\ref{tab:inference_time} compares computational efficiency using a single NVIDIA RTX 4090. In terms of inference, FLAME achieves an $8\times$ speedup over multi-step diffusion policies, matching the order-of-magnitude latency of SAC via one-step generation (NFE=1). Regarding training cost, FLAME-R remains competitive with standard diffusion baselines. Although FLAME-M incurs higher wall-clock training time due to the decoupled multi-step critic integration, this overhead is strictly confined to the learning phase, ensuring that deployment-time action generation remains unaffected and efficient.

FLAME introduces additional computation during policy improvement through Q-reweighting over $K$ reverse-sampled candidates, but the overhead remains practical in modern GPU settings. In our implementation, Q-reweighting is applied only during actor updates and is fully parallelized across candidates, and FLAME avoids backpropagation through multi-step denoising chains required by diffusion policies. As a result, the overall training cost is competitive with multi-step diffusion baselines while providing substantially lower inference latency at deployment.

FLAME targets settings that require expressive policies with low-latency deployment. FLAME-R is a robust default for stable training, while FLAME-M is preferred when deployment-time throughput is critical.

\section{Conclusion}
\label{sec:conclusion}

We presented FLAME, a MaxEnt RL framework for flow-based policies that achieves expressive behavior with one-step action generation. FLAME makes MaxEnt policy improvement tractable via a Q-reweighted flow-matching objective, and stabilizes entropy-regularized learning through two complementary likelihood estimators. Across MuJoCo and visual DeepMind Control Suite benchmarks, FLAME matches or improves upon strong multi-step diffusion baselines while reducing inference to a single function evaluation, enabling low-latency deployment. Future work includes extending FLAME to more complex real-world robotic domains.

\section*{Impact Statement}
This paper presents work whose goal is to advance the field of Machine
Learning. There are many potential societal consequences of our work, none of
which we feel must be specifically highlighted here.

\bibliography{example_paper}
\bibliographystyle{icml2026}

\newpage
\appendix
\onecolumn

\section{Related Work} \label{sec:related_work}

\paragraph{Diffusion-based Generative Policies.}
The integration of generative models into RL has shifted policy representations from unimodal Gaussians to expressive, multi-modal generators. Existing diffusion-based RL methods largely follow two optimization paradigms. Weighted-regression approaches formulate policy learning as weighted denoising regression on replay actions, typically avoiding backpropagation through the denoising chain; representative examples include QVPO~\cite{ding2024diffusion}, DIME~\cite{celik2025dimediffusionbasedmaximumentropyreinforcement}, and MaxEntDP~\cite{dong2025}, as well as reweighted score-matching objectives that yield efficient mirror-descent and max-entropy style updates such as DPMD and SDAC~\cite{ma2025effi}. Reparameterized policy-gradient approaches directly backpropagate return gradients through the reverse diffusion process, including DPPO~\cite{ren2024diffusionpolicypolicyoptimization} and diffusion actor-critic variants such as DACER~\cite{wang2024diffusion}. In addition, critic-guided diffusion optimization can be seen as a closely related instantiation of the weighted-regression route, where the denoiser is trained using critic-derived improvement signals, as in QSM~\cite{psenka2025learningdiffusionmodelpolicy} and DIPO~\cite{yang2023policyrepresentationdiffusionprobability}. Despite improved expressiveness, diffusion policies fundamentally require multi-step sampling at inference, making inference efficiency a persistent bottleneck and motivating one-step generative alternatives.

\paragraph{Flow-based Policies in Reinforcement Learning.}
Early approaches like FQL~\cite{park2025flowqlearning} prioritize stable policy extraction by distilling one-step policies from offline-trained flow models. This design effectively bypasses recursive backpropagation but may benefit from supplemental exploration strategies to further enhance performance in active online fine-tuning. Building upon this, methods such as ReinFlow~\cite{zhang2025reinflowfinetuningflowmatching} and Flow-GRPO~\cite{liu2025flowgrpotrainingflowmatching} have successfully extended flow matching to the online RL setting. ReinFlow introduces a stochastic discrete-time formulation that facilitates policy refinement through PPO-style surrogate objectives, while Flow-GRPO leverages a group-relative reward mechanism to optimize the flow vector field directly, bypassing the need for explicit log-probability computation during the training loop. In parallel, FPMD~\cite{chen2025onestepflowpolicymirror} reformulates MeanFlow to enable one-step generative policies trained via Policy Mirror Descent, allowing the agent to perform value-based policy updates directly in the action space. In a different vein, FlowRL~\cite{lv2025flowbasedpolicyonlinereinforcement} formulates a transport-regularized policy search objective , while ORW-CFM-W2~\cite{fan2025onlinerewardweightedfinetuningflow} utilizes Wasserstein-2 regularization within the conditional FM framework to ensure stable refinement. Closely related to the Maximum Entropy framework, MEOW~\cite{chao2024} adopts an explicit energy-based normalizing flow design, which jointly parameterizes the policy, soft Q-function, and soft value function to enable exact likelihood and value computation. While this approach provides a principled way to incorporate entropy, it necessitates explicit energy parameterization and normalization, which can impose architectural constraints and tightly couple policy learning to Bellman regression.

\paragraph{Summary and Positioning.}
Existing generative policies face a trade-off between the inference latency of diffusion models and the non-principled entropy handling of efficient flow-based surrogates. While explicit energy-based flows resolve this through joint modeling, they suffer from limited architectural flexibility. FLAME aims to combine the efficiency of one-step flow matching with the principled exploration of MaxEnt RL. By integrating a Q-reweighted objective for policy updates and a decoupled estimator for entropy calculation, we achieve accurate density estimation for stable training without sacrificing inference speed. This framework avoids the high latency of diffusion models and the complex constraints of energy-based architectures.

\section{Theoretical Derivations}
\label{app:derivations}

\subsection{Derivations of Proposition \ref{prop:fm_cfm_equivalence}}
\label{subsec:derivations_prop}

\textbf{Proposition \ref{prop:fm_cfm_equivalence}.}
\textit{Assuming $p_t(a \mid s) > 0$ for all $a \in \mathcal{A}$ and $t \in [0,1]$, the Conditional Flow Matching objective
$\mathcal{L}_{\mathrm{CFM}}$ and the marginal Flow Matching objective $\mathcal{L}_{\mathrm{FM}}$ satisfy
$\mathcal{L}_{\mathrm{CFM}}(\theta) = \mathcal{L}_{\mathrm{FM}}(\theta) + C_1$, where $C_1$ is independent of $\theta$.
Consequently, $\nabla_\theta \mathcal{L}_{\mathrm{FM}}(\theta) = \nabla_\theta \mathcal{L}_{\mathrm{CFM}}(\theta)$.}

\begin{proof}
Fix a state $s$ and time $t$. Let $u_\theta(a,t,s)$ be the learned velocity field, and $u_t(a \mid a_1,s)$ and $u_t(a \mid s)$ denote the conditional and marginal target vector fields, respectively.
Define the joint distribution
$p_t(a,a_1 \mid s) \triangleq p_t(a \mid a_1)\,\pi_{\text{new}}(a_1 \mid s)$
and the marginal
$p_t(a \mid s) = \int p_t(a \mid a_1)\pi_{\text{new}}(a_1 \mid s)\,\mathrm{d}a_1$.
Since $p_t(a\mid s)>0$, the conditional density
\[
p_t(a_1 \mid a,s) = \frac{p_t(a \mid a_1)\pi_{\text{new}}(a_1 \mid s)}{p_t(a \mid s)}
\]
is well-defined, and the marginal target field is the conditional expectation
\begin{equation}
\label{eq:ut_marginal_as_condexp_app}
u_t(a \mid s)
= \mathbb{E}_{a_1 \sim p_t(\cdot \mid a,s)}\big[\,u_t(a \mid a_1,s)\,\big]
= \int u_t(a \mid a_1,s)\,\frac{p_t(a \mid a_1)\pi_{\text{new}}(a_1 \mid s)}{p_t(a \mid s)}\,\mathrm{d}a_1.
\end{equation}

Recall the objectives (conditioning on $s$ is implicit):
\[
\mathcal{L}_{\mathrm{CFM}}(\theta)
= \mathbb{E}_{t}\,\mathbb{E}_{a_1 \sim \pi_{\text{new}}(\cdot \mid s)}\,\mathbb{E}_{a \sim p_t(\cdot \mid a_1)}
\left[\left\|u_\theta(a,t,s)-u_t(a \mid a_1,s)\right\|^2\right],
\]
\[
\mathcal{L}_{\mathrm{FM}}(\theta)
= \mathbb{E}_{t}\,\mathbb{E}_{a \sim p_t(\cdot \mid s)}
\left[\left\|u_\theta(a,t,s)-u_t(a \mid s)\right\|^2\right].
\]

Expanding the squared error in $\mathcal{L}_{\mathrm{CFM}}(\theta)$ and integrating over $(a,a_1)\sim p_t(a,a_1\mid s)$ yield
\begin{equation}
\begin{aligned}
\mathcal{L}_{\mathrm{CFM}}(\theta)
&= \mathbb{E}_{t}\iint
\Big(\|u_\theta\|^2 - 2\langle u_\theta, u_t(a\mid a_1,s)\rangle + \|u_t(a\mid a_1,s)\|^2\Big)\,
p_t(a \mid a_1)\pi_{\text{new}}(a_1 \mid s)\,\mathrm{d}a_1\,\mathrm{d}a \\
&= \mathbb{E}_{t}\int \|u_\theta\|^2\,p_t(a\mid s)\,\mathrm{d}a
- 2\,\mathbb{E}_{t}\int \left\langle u_\theta,
\int u_t(a\mid a_1,s)\,p_t(a_1\mid a,s)\,\mathrm{d}a_1
\right\rangle p_t(a\mid s)\,\mathrm{d}a \\
&\quad + \mathbb{E}_{t}\iint \|u_t(a\mid a_1,s)\|^2\,p_t(a \mid a_1)\pi_{\text{new}}(a_1 \mid s)\,\mathrm{d}a_1\,\mathrm{d}a.
\end{aligned}
\end{equation}
By \eqref{eq:ut_marginal_as_condexp_app}, the inner integral equals $u_t(a\mid s)$, so
\begin{equation}
\begin{aligned}
\mathcal{L}_{\mathrm{CFM}}(\theta)
&= \mathbb{E}_{t}\int \Big(\|u_\theta\|^2 - 2\langle u_\theta,u_t(a\mid s)\rangle\Big)p_t(a\mid s)\,\mathrm{d}a
+ \mathbb{E}_{t}\iint \|u_t(a\mid a_1,s)\|^2\,p_t(a \mid a_1)\pi_{\text{new}}(a_1 \mid s)\,\mathrm{d}a_1\,\mathrm{d}a .
\end{aligned}
\end{equation}

Similarly, expanding $\mathcal{L}_{\mathrm{FM}}(\theta)$ gives
\begin{equation}
\mathcal{L}_{\mathrm{FM}}(\theta)
= \mathbb{E}_{t}\int
\Big(\|u_\theta\|^2 - 2\langle u_\theta, u_t(a\mid s)\rangle + \|u_t(a\mid s)\|^2\Big)\,p_t(a\mid s)\,\mathrm{d}a.
\end{equation}

\begin{equation}
\label{eq:bridge_pointwise_fm_cfm}
\begin{aligned}
\left\|u_\theta(a,t,s)-u_t(a\mid s)\right\|^2
&=
\frac{1}{p_t(a\mid s)}
\int p_t(a\mid a_1)\,\pi_{\text{new}}(a_1\mid s)\,
\left\|u_\theta(a,t,s)-u_t(a\mid a_1,s)\right\|^2\,\mathrm{d}a_1 \\
&\quad -\Bigg(
\int p_t(a_1\mid a,s)\,\|u_t(a\mid a_1,s)\|^2\,\mathrm{d}a_1
-\|u_t(a\mid s)\|^2
\Bigg),
\end{aligned}
\end{equation}

\begin{equation}
\label{eq:bridge_LFM_LCFM}
\mathcal{L}_{\mathrm{FM}}(\theta)
=
\mathcal{L}_{\mathrm{CFM}}(\theta)
-
\mathbb{E}_{t}\!\int p_t(a\mid s)\,
\Bigg(
\int p_t(a_1\mid a,s)\,\|u_t(a\mid a_1,s)\|^2\,\mathrm{d}a_1
-\|u_t(a\mid s)\|^2
\Bigg)\mathrm{d}a.
\end{equation}

\begin{equation}
\label{eq:bridge_C_def}
C_1 \triangleq
\mathbb{E}_{t}\!\int p_t(a\mid s)\,
\Bigg(
\int p_t(a_1\mid a,s)\,\|u_t(a\mid a_1,s)\|^2\,\mathrm{d}a_1
-\|u_t(a\mid s)\|^2
\Bigg)\mathrm{d}a,
\qquad
\Rightarrow\quad
\mathcal{L}_{\mathrm{CFM}}(\theta)=\mathcal{L}_{\mathrm{FM}}(\theta)+C_1.
\end{equation}

Therefore,
\begin{equation}
\begin{aligned}
\mathcal{L}_{\mathrm{CFM}}(\theta) - \mathcal{L}_{\mathrm{FM}}(\theta)
&= \mathbb{E}_{t}\left[
\iint \|u_t(a\mid a_1,s)\|^2\,p_t(a \mid a_1)\pi_{\text{new}}(a_1 \mid s)\,\mathrm{d}a_1\,\mathrm{d}a
-\int \|u_t(a\mid s)\|^2\,p_t(a\mid s)\,\mathrm{d}a
\right],
\end{aligned}
\end{equation}
which is independent of $\theta$. Labeling the bracketed term as a constant $C_1$ completes the proof.
\end{proof}

\subsection{Proof of Proposition \ref{prop:reweighting_invariance}}
\label{app:proof_reweighting}

\begin{proof}
We prove the proposition by analyzing the objective function in the space of measurable vector fields.

Consider the $g$-weighted flow matching objective defined in formula~\eqref{eq:weighted_fm_prop}:
\begin{equation}
    \mathcal{L}_{\mathrm{FM}}^{g}(\theta) = \mathbb{E}_{t,s}\!\left[\int_{\mathcal{A}} g(a_t,s)\,\big\|u_\theta(a_t,t,s)-u_t(a_t\mid s)\big\|^2\,\mathrm{d}a_t\right].
\end{equation}
Since the weight function $g(a_t, s)$ is strictly positive (i.e., $g(a_t, s) > 0$ for all $a_t \in \mathcal{A}, s \in \mathcal{S}$) and the squared norm $\big\| \cdot \big\|^2$ is non-negative, the integrand is non-negative almost everywhere. Therefore, the lower bound of the objective is $\mathcal{L}_{\mathrm{FM}}^{g}(\theta) \geq 0$.

For a fixed time $t$ and state $s$, consider the functional $\mathcal{J}(u_\theta) = \int_{\mathcal{A}} g(a_t,s) \|u_\theta(a_t) - u_t(a_t)\|^2 \mathrm{d}a_t$. The global minimum $\mathcal{J}(u_\theta) = 0$ is achieved if and only if the integrand is zero almost everywhere (with respect to the Lebesgue measure on $\mathcal{A}$).
Specifically,
\begin{equation}
    g(a_t,s) \, \big\|u_\theta(a_t,t,s)-u_t(a_t\mid s)\big\|^2 = 0 \quad \text{a.e.}
\end{equation}
Because $g(a_t, s) > 0$, the above condition holds if and only if:
\begin{equation}
    \big\|u_\theta(a_t,t,s)-u_t(a_t\mid s)\big\|^2 = 0 \implies u_\theta(a_t,t,s) = u_t(a_t\mid s) \quad \text{a.e.}
\end{equation}
This is identical to the optimality condition for the unweighted objective (where $g(a_t, s) \equiv 1$).

Under the realizability assumption (i.e., assuming the parameter space $\Theta$ is expressive enough so that there exists $\theta^* \in \Theta$ satisfying $u_{\theta^*} = u_t$ almost everywhere), any parameter $\theta$ is a global minimizer of $\mathcal{L}_{\mathrm{FM}}^{g}(\theta)$ if and only if it satisfies $u_\theta = u_t$ almost everywhere. Thus, the set of global minimizers is invariant to the choice of the strictly positive weight function $g$.
\end{proof}

\subsection{Derivation of Q-Reweighted Flow Matching (FLAME-R)}
\label{subsec:derivations_flame_r}

We restate the main result. To handle the intractable partition function in the target distribution,
define a strictly positive reweighting function
\begin{equation}
g^{\text{MaxEnt}}(a_t, s) \triangleq h_t(a_t \mid s)\, Z(s)\, p_t(a_t \mid s),
\end{equation}
where $h_t(a_t \mid s)$ is a tractable proposal distribution with full support on $\mathcal{A}$.

Consider the (time-conditioned) weighted marginal FM loss
\begin{equation}
\label{eq:weighted_fm_app}
\mathcal{L}^{g}(\theta)
= \mathbb{E}_{t}\int g(a_t,s)\,\big\|u_\theta(a_t,t,s)-u_t(a_t\mid s)\big\|^2\,\mathrm{d}a_t.
\end{equation}
As in Proposition~\ref{prop:fm_cfm_equivalence}, using the law of total expectation and expanding (up to a
$\theta$-independent constant), one can rewrite \eqref{eq:weighted_fm_app} into the conditional form
\begin{equation}
\label{eq:weighted_fm_cond_app}
\mathcal{L}^{g}(\theta)
= \mathbb{E}_{t}\iint
\frac{g(a_t,s)}{p_t(a_t\mid s)}\,p_t(a_t\mid a_1)\,\pi_{\text{new}}(a_1\mid s)\,
\big\|u_\theta(a_t,t,s)-u_t(a_t\mid a_1)\big\|^2\,\mathrm{d}a_1\,\mathrm{d}a_t \;+\; C_2.
\end{equation}

Substituting
$g^{\text{MaxEnt}}(a_t,s)=h_t(a_t\mid s)Z(s)p_t(a_t\mid s)$ and
$\pi_{\text{new}}(a_1\mid s)=\exp(Q(s,a_1)/\alpha)/Z(s)$ into \eqref{eq:weighted_fm_cond_app} gives
\begin{equation}
\label{eq:after_cancel_app}
\mathcal{L}^{g_{\text{MaxEnt}}}(\theta)
= \mathbb{E}_{t}\iint
h_t(a_t\mid s)\,p_t(a_t\mid a_1)\,\exp\!\left(\frac{Q(s,a_1)}{\alpha}\right)\,
\big\|u_\theta(a_t,t,s)-u_t(a_t\mid a_1)\big\|^2\,\mathrm{d}a_1\,\mathrm{d}a_t \;+\; C_2.
\end{equation}

For the OT path $a_t=t a_1+(1-t)a_0$ with $a_0\sim\mathcal{N}(0,I)$, the forward kernel admits
\begin{equation}
p_t(a_t\mid a_1)=\mathcal{N}\!\left(a_t\mid t a_1,(1-t)^2I\right).
\end{equation}
Define the reverse conditional
\begin{equation}
\phi_{1|t}(a_1\mid a_t)
=\mathcal{N}\!\left(a_1\Bigm|\frac{a_t}{t},\frac{(1-t)^2}{t^2}I\right).
\end{equation}
A direct comparison of the Gaussian normalizing constants yields the \emph{exact} density relation
\begin{equation}
\label{eq:reverse_density_relation_app}
\phi_{1|t}(a_1\mid a_t)=t^{d}\,p_t(a_t\mid a_1)
\quad\Longleftrightarrow\quad
p_t(a_t\mid a_1)=t^{-d}\,\phi_{1|t}(a_1\mid a_t),
\end{equation}
where $d=\dim(\mathcal{A})$.

Combining \eqref{eq:reverse_density_relation_app} with \eqref{eq:after_cancel_app} yields
\begin{equation}
\begin{aligned}
\mathcal{L}^{g_{\text{MaxEnt}}}(\theta)
&= \mathbb{E}_{t}\iint
h_t(a_t\mid s)\,t^{-d}\phi_{1|t}(a_1\mid a_t)\,
\exp\!\left(\frac{Q(s,a_1)}{\alpha}\right)\,
\big\|u_\theta(a_t,t,s)-u_t(a_t\mid a_1)\big\|^2\,\mathrm{d}a_1\,\mathrm{d}a_t \;+\; C_2 \\
&= \mathbb{E}_{t}\Bigg[
t^{-d}\,\mathbb{E}_{\substack{a_t\sim h_t(\cdot\mid s)\\ a_1\sim \phi_{1|t}(\cdot\mid a_t)}}
\Big[
\exp\!\left(\frac{Q(s,a_1)}{\alpha}\right)\,
\big\|u_\theta(a_t,t,s)-u_t(a_t\mid a_1)\big\|^2
\Big]\Bigg] \;+\; C_2.
\end{aligned}
\end{equation}
For the OT path, $u_t(a_t\mid a_1)=a_1-a_0$ as well as $a_0=(a_t-t a_1)/(1-t)$ is deterministically recovered.

\paragraph{Dropping the $t^{-d}$ factor.}
The multiplicative factor $t^{-d}>0$ depends only on $t$ (and dimension $d$) and is independent of $\theta$.
In the realizable setting where the model can match the target vector field for each $t$, such a positive time-only reweighting
does not change the set of global minimizers (which achieve zero regression error pointwise).
In practice, $t^{-d}$ can heavily overweight small $t$ and cause numerical instability; hence we absorb it into the time weighting /
sampling design (together with $t\sim \mathcal{U}[\varepsilon,1]$) and omit it for clarity, yielding the proportional objective
reported in the main text. \qed

\subsection{Derivation of Q-Reweighted MeanFlow (QRMF)}
\label{subsec:derivations_flame_m}

We show the same Q-reweighting extends to the MeanFlow objective. Starting from the weighted MeanFlow loss
\begin{equation}
\mathcal{L}^g_{\mathrm{MF}}(\theta)
= \mathbb{E}_{t}\int g(a_t,s)\,
\left\| \overline{u}_{\theta}(a_t, \zeta, t, s) - \text{sg}(\overline{u}_{\text{tgt}}) \right\|^2 \mathrm{d}a_t.
\end{equation}
Introducing $\pi_{\text{new}}(a_1\mid s)$ and expanding the marginal (up to a $\theta$-independent constant) yield
\begin{equation}
\label{eq:mf_cond_app}
\mathcal{L}^g_{\mathrm{MF}}(\theta)
= \mathbb{E}_{t}\iint \frac{g(a_t, s)}{p_t(a_t \mid s)}\,
p_t(a_t \mid a_1)\,\pi_{\text{new}}(a_1 \mid s)\,
\left\| \overline{u}_{\theta} - \text{sg}(\overline{u}_{\text{tgt}}) \right\|^2 \mathrm{d}a_1 \mathrm{d}a_t \;+\; C_3.
\end{equation}

The substitution of $g^{\text{MaxEnt}}(a_t,s)=h_t(a_t\mid s)Z(s)p_t(a_t\mid s)$ and
$\pi_{\text{new}}(a_1\mid s)=\exp(Q(s,a_1)/\alpha)/Z(s)$ cancels $Z(s)$ and $p_t(a_t\mid s)$:
\begin{equation}
\label{eq:mf_after_cancel_app}
\mathcal{L}^{g_{\text{MaxEnt}}}_{\mathrm{MF}}(\theta)
= \mathbb{E}_{t}\iint h_t(a_t \mid s)\, p_t(a_t \mid a_1)\,
\exp\!\left(\frac{Q(s, a_1)}{\alpha}\right)
\left\| \overline{u}_{\theta} - \text{sg}(\overline{u}_{\text{tgt}}) \right\|^2 \mathrm{d}a_1 \mathrm{d}a_t \;+\; C_3.
\end{equation}

Using the same reverse density relation \eqref{eq:reverse_density_relation_app},
$p_t(a_t\mid a_1)=t^{-d}\phi_{1|t}(a_1\mid a_t)$, we obtain
\begin{equation}
\begin{aligned}
\mathcal{L}^{g_{\text{MaxEnt}}}_{\mathrm{MF}}(\theta)
&= \mathbb{E}_{t}\Bigg[
t^{-d}\,\mathbb{E}_{\substack{a_t \sim h_t(\cdot|s) \\ a_1 \sim \phi_{1|t}(\cdot|a_t)}}
\left[ \exp\!\left(\frac{Q(s, a_1)}{\alpha}\right)
\left\| \overline{u}_{\theta}(a_t, \zeta, t, s) - \text{sg}(\overline{u}_{\text{tgt}}) \right\|^2 \right]\Bigg] \;+\; C_3.
\end{aligned}
\end{equation}

As in the FLAME-R derivation, $t^{-d}$ depends only on $t$ and is independent of $\theta$.
We absorb it into the time weighting / sampling design and omit it in the final reported objective:
\begin{equation}
\mathcal{L}_{\text{QRMF}}(\theta) \propto
\mathbb{E}_{\substack{t \sim \mathcal{U}[\varepsilon,1] \\ a_t \sim h_t(\cdot|s) \\ a_1 \sim \phi_{1|t}(\cdot|a_t)}}
\left[ \exp\!\left(\frac{Q(s, a_1)}{\alpha}\right)
\left\| \overline{u}_{\theta}(a_t, \zeta, t, s) - \text{sg}(\overline{u}_{\text{tgt}}) \right\|^2 \right].
\end{equation}
Substituting the OT definitions into $\overline{u}_{\text{tgt}}$ yields the final algorithm. \qed

\section{Proofs for Density Estimation Bias}

\subsection{Proof of Proposition \ref{prop:single_step_bias} (Single-Step Entropy Bias)}
\label{proof:single_step_bias}

\begin{proof}
We analyze the one-step approximation that replaces the continuous-time change-of-variables integral by a first-order Jacobian term, which corresponds to a local (first-order) approximation of the log-determinant.
Consider the transformation from prior $a_0$ to action $a_1$ via a single step $a_1 = a_0 + \overline{u}_\theta(a_0, 0, 1, s)$ generated by the MeanFlow policy. The true change in log-density is determined by the Jacobian of the transformation $F(a) = a + \overline{u}_\theta(a, 0, 1, s)$. By the change of variables formula, we obtain:
\begin{equation}
    \Delta \log p_{\text{true}} = -\log |\det(\nabla_a F(a))| = -\log |\det(I + J_{\overline{u}})|,
\end{equation}
where we denote $J_{\overline{u}} = \frac{\partial \overline{u}_\theta(a, 0, 1, s)}{\partial a}$ as the Jacobian matrix of the global average velocity field. Using the identity $\log \det(A) = \text{Tr}(\log A)$ and the Taylor series expansion for the matrix logarithm $\log(I + X) = X - \frac{1}{2}X^2 + O(X^3)$, we expand the true log-density change:
\begin{equation}
\begin{aligned}
    \Delta \log p_{\text{true}} &= -\text{Tr}(\log(I + J_{\overline{u}})) \\
    &= -\text{Tr}\left( J_{\overline{u}} - \frac{1}{2}J_{\overline{u}}^2 + O(J_{\overline{u}}^3) \right) \\
    &= -\text{Tr}(J_{\overline{u}}) + \frac{1}{2}\text{Tr}(J_{\overline{u}}^2) - O(\|J_{\overline{u}}\|^3).
\end{aligned}
\end{equation}
The single-step augmented ODE estimator approximates this change using the instantaneous divergence integrated over the full interval:
\begin{equation}
    \Delta \log p_{\text{est}} = -\int_0^1 \text{Tr}(J_{\overline{u}})\, dt = -\text{Tr}(J_{\overline{u}}).
\end{equation}
The discretization error is the difference between the true value and this linear approximation:
\begin{equation}
    \mathcal{E}_{\text{single}} = \left| \Delta \log p_{\text{true}} - \Delta \log p_{\text{est}} \right| \approx \frac{1}{2}\text{Tr}(J_{\overline{u}}^2).
\end{equation}
Thus, the bias is dominated by the trace of the squared Jacobian, confirming the $O(\|J_{\overline{u}}\|^2)$ leading-order term.
\end{proof}

\subsection{Proof of Corollary \ref{cor:multi_step_error} (Multi-Step Error Suppression)}
\label{proof:multi_step_error}

\begin{proof}
To reduce the bias, the decoupled strategy in FLAME-M treats the mapping as a composition of $N_{\text{est}}$ discrete substeps. We partition the trajectory into $N_{\text{est}}$ intervals with step size $\Delta t = 1/N_{\text{est}}$. For each substep $k$, the integration moves from $t_k$ to $t_{k+1}$, where the update admits $a_{k+1} = a_k + \overline{u}_\theta(a_k, t_k, t_{k+1}, s) \Delta t$. 

The effective Jacobian for a single substep $k$ is $I + \Delta t J_{k}$, where $J_{k} = \frac{\partial \overline{u}_\theta(a_k, t_k, t_{k+1}, s)}{\partial a}$. Following the expansion in Proposition \ref{prop:single_step_bias}, the error $\mathcal{E}_k$ for this substep becomes:
\begin{equation}
    \mathcal{E}_k \approx \frac{1}{2}\text{Tr}((\Delta t J_k)^2) = \frac{1}{2 N_{\text{est}}^2} \text{Tr}(J_k^2).
\end{equation}
The total cumulative error $\mathcal{E}_{\text{multi}}$ over the full trajectory is the sum of errors from all $N_{\text{est}}$ steps. Assuming that the Jacobian magnitude is bounded such that $\text{Tr}(J_k^2) \approx \text{Tr}(J_{\overline{u}}^2)$:
\begin{equation}
    \mathcal{E}_{\text{multi}} = \sum_{k=1}^{N_{\text{est}}} \mathcal{E}_k \approx N_{\text{est}} \cdot \left( \frac{1}{2 N_{\text{est}}^2}\text{Tr}(J_{\overline{u}}^2) \right) = \frac{1}{2 N_{\text{est}}}\text{Tr}(J_{\overline{u}}^2).
\end{equation}
Consequently, by increasing $N_{\text{est}}$, the discretization error is suppressed linearly, allowing the critic to provide an accurate entropy signal for MaxEnt RL while interaction remains one-step.
\end{proof}

\section{Detailed Experimental Settings and Additional Results}
\label{app:settings}

\subsection{Implementation Details and Training Details}
\label{app:implementation_details}

\paragraph{Network Architecture.} 
We parameterize both the flow policy (actor) and the critic networks using Multi-Layer Perceptrons (MLPs) with Mish activations~\cite{misra2020mishselfregularizednonmonotonic}. To enable time-dependent flow generation, we encode the flow time $t$ using sinusoidal position embeddings~\cite{vaswani2023attentionneed} and concatenate the embedding vector with the state input before feeding it into the policy network. For visual tasks, we employ the same convolutional encoder as in ~\citet{kostrikov2021imageaugmentationneedregularizing, chen2025onestepflowpolicymirror} to extract features from image observations, which are then concatenated with the time embeddings and passed to the MLP policy network. The visual encoder is opdated using only the gradient from the critic loss Eq.~\eqref{eq:softbellman_errorvisual}.

\paragraph{Action Sampling.}
To enhance exploration and performance, the diffusion-based baselines in our experiments employ a batch action sampling strategy~\citep{ma2025effi,wang2024diffusion}. This involves sampling $N_{act}$ candidate actions $\{a_i\}_{i=1}^{N_{act}}$ from the current policy and selecting the one with the maximum Q-value: $a=\arg\max_i Q(s,a_i)$. In contrast, FLAME generates a single action directly from the flow model during both training and evaluation, without any candidate filtering or selection. Notably, during the evaluation phase, both FLAME-R and FLAME-M perform action generation in a single step (NFE=1) to demonstrate their inference efficiency.

\paragraph{Timestep Schedule.} 
The timestep schedule is critical for the learning dynamics of flow-based models. For FLAME-R, we sample the flow time $t$ from a standard uniform distribution $\mathcal{U}[\varepsilon, 1]$, $\varepsilon=10^{-3}$. For FLAME-M, we follow the sampling protocol in \citet{geng2025mean} but introduce a key modification to handle the non-stationary target distributions in online RL. Specifically, we enforce $\Pr(\zeta < t)=1$ by always sampling distinct time points, which forces the model to maintain integral consistency across the entire trajectory throughout the evolving policy iteration process.

\paragraph{Critic Training.} 
Following the approach in \citet{chen2025onestepflowpolicymirror}, we implement a dual-critic architecture to mitigate value overestimation during the policy evaluation step. For state-based tasks, the critics $Q_{\phi_{1,2}}$ are parameterized as standard MLPs; for visual-input tasks, we utilize a CNN encoder to process raw pixels. To incorporate entropy regularization, we estimate the log-likelihood $\log \pi_\theta(a|s)$ via the augmented ODE $N_{\text{est}}$ integration steps. The twin critics are updated by minimizing the soft Bellman error. For standard tasks, we use a single-step temporal difference target:
\begin{equation}
    L_{\phi_k}=\mathbb{E}_{s, a, s^{\prime}, a^{\prime}}\left[\left(Q_{\phi_k}(s, a)-\left(r+\gamma \min _{j=1,2} Q_{\bar{\phi}_j}\left(s^{\prime}, a^{\prime}\right) - \alpha \log \pi_\theta(a'|s') \right)\right)^2\right] \quad \forall k \in\{1,2\}.
\end{equation}
For visual tasks, we extend this to a multi-step $n$-step return:
\begin{equation}
    \label{eq:softbellman_errorvisual}
\begin{split}
    L_{\phi_k} = \mathbb{E}_{\{s_{t+i}, a_{t+i}\}_{i=0}^n \sim \mathcal{D}} \Big[ \Big( &Q_{\phi_k}(s_t, a_t) - \Big( \sum_{i=0}^{n-1} \gamma^i r_{t+i} \\
    & + \gamma^n (\min_{j=1,2} Q_{\bar{\phi}_j}(s_{t+n}, a_{t+n}) - \alpha \log \pi_\theta(a_{t+n}|s_{t+n})) \Big) \Big)^2 \Big] \quad \forall k \in\{1,2\},
\end{split}
\end{equation}

\paragraph{Training and Evaluation Protocol}
By default, we train for 1 million environment steps (corresponding to 200K gradient updates) on standard MuJoCo tasks. For the high-dimensional \textsc{Humanoid-v5} task, we extend the training budget to 2 million environment steps (400K gradient updates) to ensure convergence. Across all environments, we adopt an Update-To-Data (UTD) ratio of 0.2, performing one gradient update for every 5 environment steps collected. During evaluation, we report the average performance over 20 episodes.

\subsection{Hyperparameters}
\label{app:Hyperparameterssettings}

To ensure a fair comparison and reproducibility, we standardize the general reinforcement learning hyperparameters across all algorithms. We separate the configuration into three tables: Table \ref{tab:common_hyperparams} and Table~\ref{tab:visual_hyperparams} list the common settings used for all methods on MuJoCo benchmarks and high-dimensional visual control tasks, respectively. Table \ref{tab:flame_specific_hyperparams} details the specific parameters for our FLAME algorithm.

\begin{table}[p]
    \small
\centering
\caption{General Hyperparameters for state-based MuJoCo environments}
\label{tab:common_hyperparams}
\begin{tabular}{ll}
\toprule
\textbf{Hyperparameter} & \textbf{Value} \\
\midrule
Critic learning rate & $3 \times 10^{-4}$ \\
Policy learning rate & $3 \times 10^{-4}$, linear annealing to $3 \times 10^{-5}$ \\
Value network hidden layers & 3 \\
Value network hidden neurons & 256 \\
Value network activation & Mish \\
Policy network hidden layers & 3 \\
Policy network hidden neurons & 256 \\
Policy network activation & Mish \\
Batch size & 256 \\
Replay buffer size & $1 \times 10^6$ \\
Action repeat & 1 \\
Frame stack & 1 \\
n-step returns & 1 \\
Discount factor & 0.99 \\
\bottomrule
\end{tabular}
\end{table}

\begin{table}[p]
    \small
\centering
\caption{General Hyperparameters for visual observation DMControl environments}
\label{tab:visual_hyperparams}
\begin{tabular}{ll}
\toprule
\textbf{Hyperparameter} & \textbf{Value} \\
\midrule
Critic learning rate & $3 \times 10^{-4}$ \\
Policy learning rate & $3 \times 10^{-4}$, linear annealing to $3 \times 10^{-5}$ \\
Value network hidden layers & 3 \\
Value network hidden neurons & 256 \\
Value network activation & Mish \\
Policy network hidden layers & 3 \\
Policy network hidden neurons & 256 \\
Policy network activation & Mish \\
Batch size & 256 \\
Replay buffer size & $1 \times 10^6$ \\
Encoder network convolutional layers & 4 \\
Encoder network kernel size & $3 \times 3$ \\
Encoder network activation & ReLU \\
Replay buffer size & $1 \times 10^6$ \\
Action repeat & 2 \\
Frame stack & 3 \\
n-step returns & 3 \\
\bottomrule
\end{tabular}
\end{table}

\begin{table}[p]
    \small
\centering
\caption{FLAME-Specific Hyperparameters}
\label{tab:flame_specific_hyperparams}
\begin{tabular}{ll}
\toprule
\textbf{Parameter} & \textbf{Value} \\
\midrule
Sampling stepsize (Training, FLAME-R) & $dt=0.05$ ($N_{\text{gen}}=20$) \\
Sampling stepsize (Evaluation) & $dt=1.0$ ($N_{\text{gen}}=1$) \\
Importance Samples ($K$) & 300 \\
Entropy Estimation Steps ($N_{\text{est}}$) & 5 \\
Coupling Strategy & Optimal Transport \\
Training Horizon & $t \sim \mathcal{U}[\varepsilon, 1]$, $\varepsilon=10^{-3}$ \\
ODE Solver & Euler \\
\bottomrule
\end{tabular}
\end{table}

\subsection{Baseline Implementation Details}
\label{app:baseline_details}

To ensure a rigorous evaluation, we compare FLAME against a diverse set of baselines across three distinct categories. 

\textbf{Classic Model-Free RL.} We include three representative algorithms: SAC~\citep{haarnoja2019softactorcriticalgorithmsapplications}, PPO~\citep{schulman2017proximalpolicyoptimizationalgorithms}, and TD3~\citep{pmlr-v80-fujimoto18a}. These methods utilize Gaussian policy parameterizations and operate with single-step inference (NFE=1). We utilize standard PyTorch implementations with default hyperparameters aligned with the original papers. For PPO, we maintain a replay buffer size of 4096 and perform 10 gradient updates per collected batch.

\textbf{Diffusion-Based RL.} This category represents high-expressivity policies that typically require iterative sampling. We compare against state-of-the-art methods including SDAC~\citep{ma2025effi}, DPMD~\citep{ma2025effi}, DIPO~\citep{yang2023policyrepresentationdiffusionprobability}, DACER~\citep{wang2024diffusion}, QSM~\citep{psenka2025learningdiffusionmodelpolicy}, and QVPO~\citep{ding2024diffusion}. To guarantee high-fidelity sample quality, the diffusion backbone is configured with $T=20$ denoising steps during inference (NFE=20), consistent with the standard settings in their respective literature.

\textbf{Flow-Based RL.} We evaluate recent baselines targeting efficient inference, including FPMD~\citep{chen2025onestepflowpolicymirror}, FlowRL~\citep{lv2025flowbasedpolicyonlinereinforcement}, and MEOW~\citep{chao2024}. For FPMD, we utilize the open-source JAX implementation to evaluate its Mirror Descent-based generative variants, FPMD-R and FPMD-M. For FlowRL and MEOW, we employ their respective PyTorch implementations, which leverage transport-regularized policy search and explicit energy-based normalizing flows. All methods adopt the optimal hyperparameter settings reported in their original papers to ensure a fair comparison.
\begin{figure}[]
    \centering
    \setlength{\tabcolsep}{1pt} 
    \renewcommand{\arraystretch}{0.5}

    \includegraphics[width=0.19\textwidth]{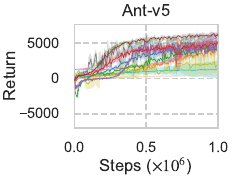} \hfill
    \includegraphics[width=0.19\textwidth]{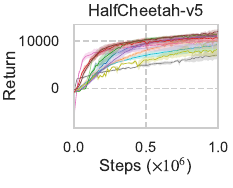} \hfill
    \includegraphics[width=0.19\textwidth]{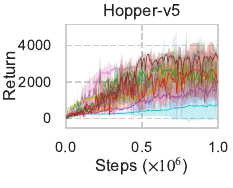} \hfill
    \includegraphics[width=0.19\textwidth]{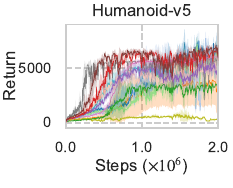} \hfill
    \includegraphics[width=0.19\textwidth]{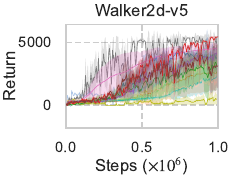}
    
    \vspace{0.15cm}

    \includegraphics[width=0.19\textwidth]{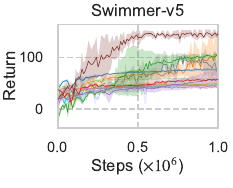} \hfill
    \includegraphics[width=0.19\textwidth]{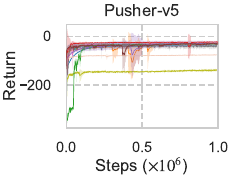} \hfill
    \includegraphics[width=0.19\textwidth]{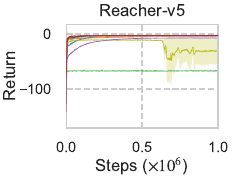} \hfill
    \includegraphics[width=0.19\textwidth]{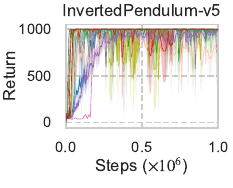} \hfill
    \includegraphics[width=0.19\textwidth]{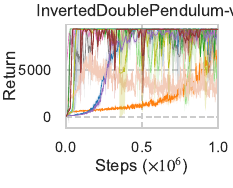}
    \vspace{0.2cm}

    \includegraphics[width=1\textwidth]{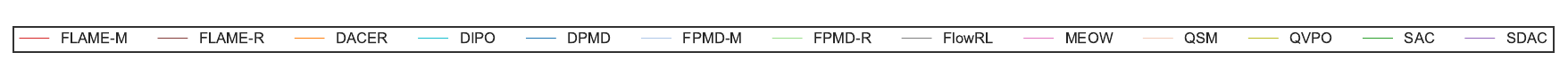}

    \caption{\textbf{Training curves on 10 MuJoCo continuous control benchmarks.} 
    The x-axis represents environment steps ($\times 10^6$) and the y-axis shows the average episode return. 
    Our methods, FLAME-R and FLAME-M, consistently match or outperform both Gaussian baselines (SAC) and multi-step diffusion policies (SDAC) while requiring only one-step inference. Shaded regions denote the standard deviation across 5 seeds.}
    \label{fig:training_curves_appendix}
\end{figure}

\subsection{Full Training Curves on MuJoCo Benchmarks}
\label{app:training_curves}

We present the complete training curves for all 10 MuJoCo environments evaluated in the main paper. As shown in Figure \ref{fig:training_curves_appendix}, both FLAME-R and FLAME-M demonstrate stable convergence and consistently high asymptotic performance compared to baselines.

\subsection{Evaluations on Visual RL Tasks}
\label{app:visual_rl}

We further evaluate FLAME on four visual-input continuous control tasks from the DeepMind Control Suite (DMC)~\citep{tassa2018deepmindcontrolsuite} to verify its scalability to high-dimensional observation spaces. We compare FLAME against three representative baselines: SAC~\citep{haarnoja2017reinforcement} (Gaussian), DPMD~\citep{ma2025effi} (Diffusion), and FPMD~\citep{chen2025onestepflowpolicymirror} (Flow). To handle pixel inputs, the SAC implementation follows DrQ~\citep{kostrikov2021imageaugmentationneedregularizing}, while DPMD and FPMD are implemented following the approach in \citet{chen2025onestepflowpolicymirror}.

As summarized in Table~\ref{tab:visual_results} and Figure~\ref{fig:visual_curves}, FLAME exhibits strong performance across the evaluated benchmarks. In the challenging Dog domain, FLAME-R achieves higher average returns than Gaussian policies and consistently exceeds the performance of the multi-step diffusion policy DPMD as well as the flow-based FPMD. These results suggest that our Q-reweighted objective and decoupled entropy estimation scale effectively to latent representations, enabling reliable one-step control directly from pixels.

\begin{figure}[H]
    \centering
    \begin{minipage}{0.24\textwidth}
        \centering
        \includegraphics[width=\linewidth]{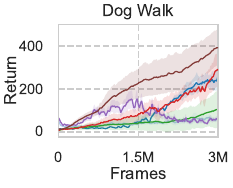}
    \end{minipage} \hfill
    \begin{minipage}{0.24\textwidth}
        \centering
        \includegraphics[width=\linewidth]{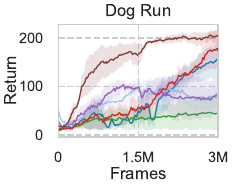}
    \end{minipage} \hfill
    \begin{minipage}{0.24\textwidth}
        \centering
        \includegraphics[width=\linewidth]{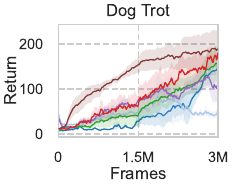}
    \end{minipage} \hfill
    \begin{minipage}{0.24\textwidth}
        \centering
        \includegraphics[width=\linewidth]{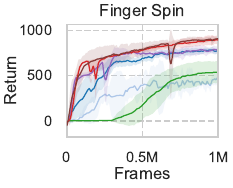}
    \end{minipage}

    \vspace{0.2cm} 

    \includegraphics[width=0.8\linewidth]{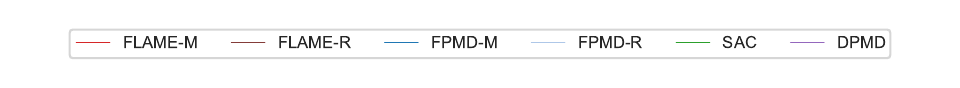} 

    \caption{\textbf{Training curves on visual DMC tasks.} FLAME-R (solid red) consistently matches or exceeds the performance of DPMD and FPMD baselines across all pixel-based environments.}
    \label{fig:visual_curves}
\end{figure}

\begin{table}[H]
\centering
\caption{Performance on visual-input tasks from the DeepMind Control Suite. Results are reported as mean $\pm$ std across 5 seeds. \colorbox{lightblue}{Blue} highlights the best overall performance.}
\label{tab:visual_results}
\small
\setlength{\tabcolsep}{5pt}
\begin{tabular}{lcccccc}
\toprule
\textbf{Task} & \textbf{SAC} & \textbf{DPMD} & \textbf{FPMD-R} & \textbf{FPMD-M} & \textbf{FLAME-M} & \textbf{FLAME-R} \\
\midrule
\textsc{Dog-walk}    & $104.0 \pm 85.7$ & $58.7 \pm 9.7$  & $58.3 \pm 3.2$  & $253.1 \pm 49.7$ & $288.1 \pm 77.7$ & \cellcolor{lightblue}$391.1 \pm 119.3$ \\
\textsc{Dog-run}     & $44.7 \pm 35.2$  & $83.6 \pm 18.3$ & $81.1 \pm 11.2$ & $156.1 \pm 19.0$ & $176.5 \pm 23.9$ & \cellcolor{lightblue}$204.6 \pm 4.9$   \\
\textsc{Dog-trot}    & $156.7 \pm 38.2$ & $107.5 \pm 30.6$ & $48.9 \pm 4.4$  & $144.4 \pm 8.9$  & $174.0 \pm 12.8$ & \cellcolor{lightblue}$187.8 \pm 53.6$ \\
\textsc{Finger-spin} & $538.4 \pm 121.1$& $783.6 \pm 43.5$ & $474.2 \pm 39.3$ & $775.5 \pm 29.7$ & $897.5 \pm 36.0$ & \cellcolor{lightblue}$908.7 \pm 14.6$  \\
\bottomrule
\end{tabular}
\end{table}

\subsection{Toy Experiment Details and Additional Visualizations}
\label{app:toy_details}

\textbf{2D GMM Environment.} 
This setup isolates density-estimation bias by providing an analytical ground-truth $\log p$. We benchmark the log-likelihood estimator using a 2D GMM comprising four symmetric modes centered at $\mu \in \{(\pm 1.0, \pm 1.0)\}$ with $\sigma=0.5$ and uniform weights. The policy network (3-layer MLP, 256 units per layer, Mish activation) is trained via OT-CFM for 10k iterations. We evaluate density estimation fidelity by comparing the trace-based log-likelihood (Eq.~\ref{eq:aug_ode}) against the analytical density across varying integration steps $N_{\text{est}}$.

Figure~\ref{fig:toy_density_vis} illustrates the impact of the integration schedule. At $N_{\text{est}}=1$, both FLAME-R and FLAME-M exhibit severe density blurring, failing to resolve the distinct multi-modal structure. This observation corroborates the $O(\|J\|^2)$ leading-order error term discussed in Proposition~\ref{prop:single_step_bias}. Increasing the estimator to $N_{\text{est}}=5$ effectively eliminates this bias, accurately recovering the Ground Truth modes. These results validate our decoupled integration strategy: employing $N_{\text{est}}=1$ for low-latency inference while maintaining $N_{\text{est}} \ge 5$ to provide stable, accurate signals for critic updates.

\textbf{MultiGoal Environment.} 
The state space is 2D continuous coordinates $s \in \mathbb{R}^2$. The action space controls velocity $a \in [-1, 1]^2$. The reward function is a mixture of unnormalized Gaussians: $r(s, a) = \sum_{i=1}^4 \exp(-\|s - g_i\|^2 / \sigma^2)$, where goals $g_i \in \{(\pm 5, \pm 5)\}$. This task requires the agent to discover all modes from scratch without warm-up demonstrations.

\begin{figure*}[t]
    \centering
    \def\colwidth{0.15\textwidth} 
    
    \begin{minipage}[c]{0.19\textwidth}
        \centering
        \includegraphics[width=\linewidth]{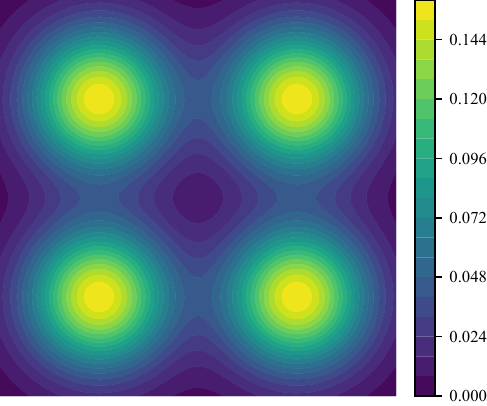} \\ 
        \vspace{2pt}
        \scriptsize \textbf{Ground Truth}
    \end{minipage}
    \hfill 
    \begin{minipage}[c]{0.80\textwidth}
        \centering
        \begin{tabular}{ccccc}
             \includegraphics[width=\colwidth]{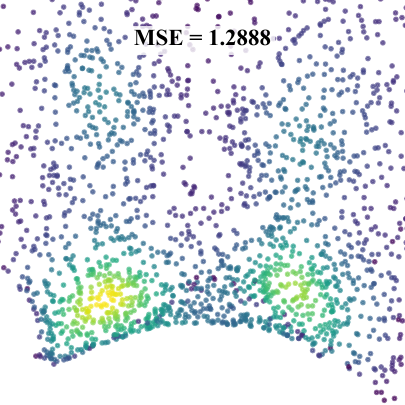} & 
             \includegraphics[width=\colwidth]{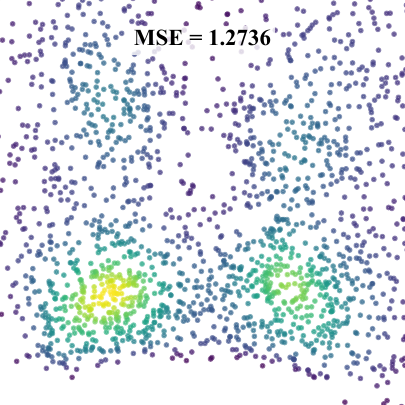} &
             \includegraphics[width=\colwidth]{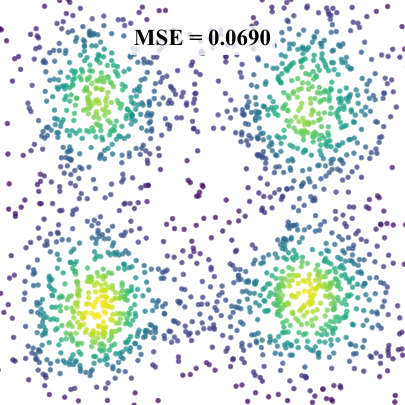} &
             \includegraphics[width=\colwidth]{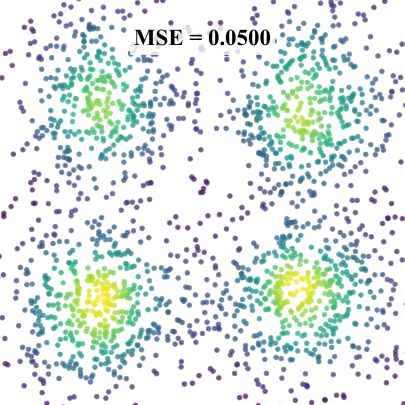} &
             \includegraphics[width=\colwidth]{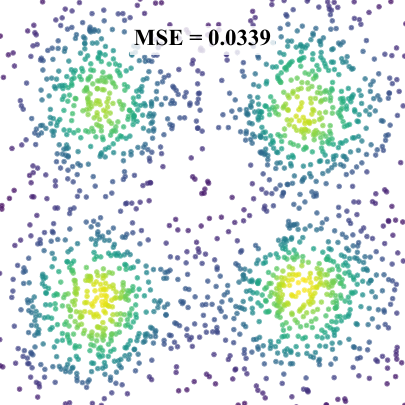} \\
             \tiny FLAME-R ($N_{\text{est}}=1$) & \tiny FLAME-R ($N_{\text{est}}=2$) & \tiny FLAME-R ($N_{\text{est}}=5$) & \tiny FLAME-R ($N_{\text{est}}=10$) & \tiny FLAME-R ($N_{\text{est}}=20$)
        \end{tabular}
        
        \vspace{0.3em}
        
        \begin{tabular}{ccccc}
             \includegraphics[width=\colwidth]{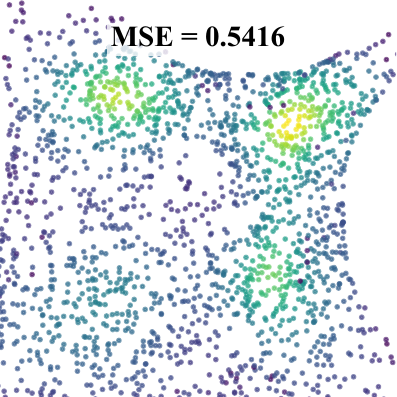} & 
             \includegraphics[width=\colwidth]{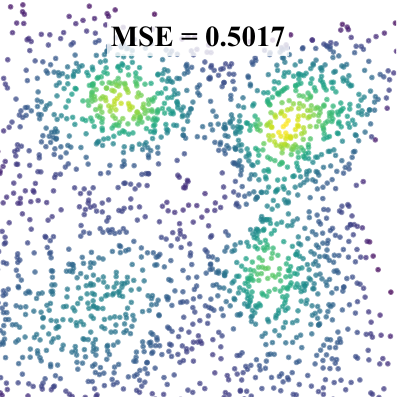} &
             \includegraphics[width=\colwidth]{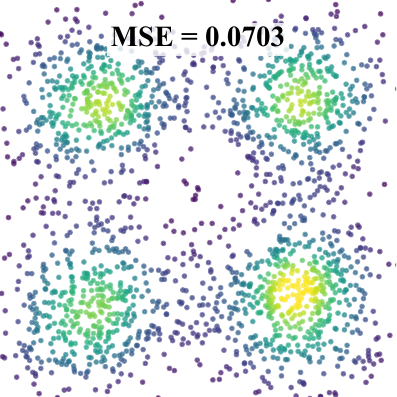} &
             \includegraphics[width=\colwidth]{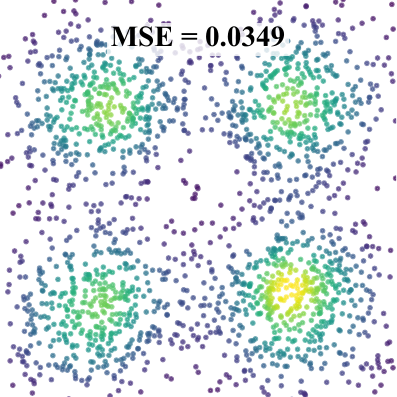} &
             \includegraphics[width=\colwidth]{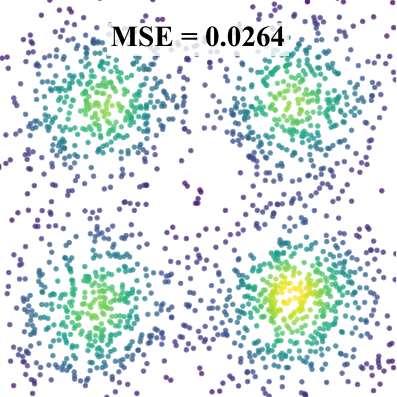} \\
             \tiny FLAME-M ($N_{\text{est}}=1$) & \tiny FLAME-M ($N_{\text{est}}=2$) & \tiny FLAME-M ($N_{\text{est}}=5$) & \tiny FLAME-M ($N_{\text{est}}=10$) & \tiny FLAME-M ($N_{\text{est}}=20$)
        \end{tabular}
    \end{minipage}
    
    \caption{\textbf{Visualizing the Bias-Variance Trade-off in Entropy Estimation.} 
    \textbf{Left:} The Ground Truth density of the 4-mode GMM. 
    \textbf{Right:} Reconstructed densities using FLAME-R (Top) and FLAME-M (Bottom) with varying ODE steps $N_{\text{est}} \in \{1, 2, 5, 10, 20\}$. 
    At $N_{\text{est}}=1$, both methods blur the modes significantly. At $N_{\text{est}}=5$, the estimation becomes sharp and accurate, matching the Ground Truth. Further increasing to $N_{\text{est}}=20$ yields minimal visual improvement, confirming that $N_{\text{est}}=5 \sim 10$ is sufficient for the critic.}
    \label{fig:toy_density_vis}
\end{figure*}
\begin{figure*}[t]
    \centering
    \setlength{\tabcolsep}{1pt}
    
    \begin{minipage}{0.24\textwidth}
        \centering
        \includegraphics[width=\linewidth]{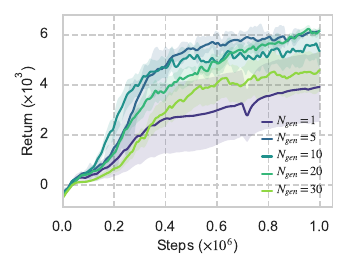}
        \caption*{\scriptsize (a) Action Steps ($N_{\text{gen}}$)}
    \end{minipage}
    \hfill
    \begin{minipage}{0.24\textwidth}
        \centering
        \includegraphics[width=\linewidth]{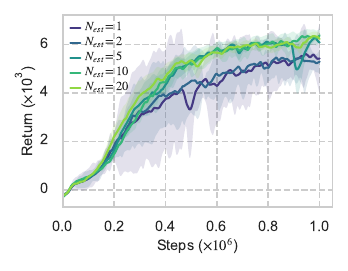}
        \caption*{\scriptsize (b) Entropy Steps ($N_{\text{est}}$)}
    \end{minipage}
    \hfill
    \begin{minipage}{0.24\textwidth}
        \centering
        \includegraphics[width=\linewidth]{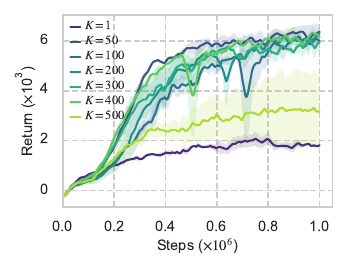}
        \caption*{\scriptsize (c) Sample Count ($K$)}
    \end{minipage}
    \hfill
    \begin{minipage}{0.24\textwidth}
        \centering
        \includegraphics[width=\linewidth]{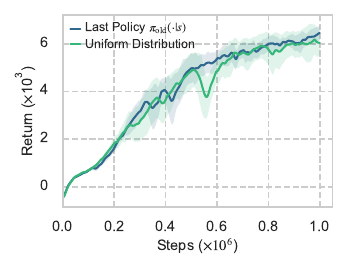}
        \caption*{\scriptsize (d) Proposal $h_t$}
    \end{minipage}
    
    \caption{\textbf{Sensitivity Analysis on Ant-v5.} 
    \textbf{(a)} Impact of flow integration steps for action generation.
    \textbf{(b)} Impact of integration steps for entropy estimation ($N_{\text{est}}$).
    \textbf{(c)} Impact of importance sample count $K$.
    \textbf{(d)} Impact of different proposal distributions $h_t$.}
    \label{fig:sensitivity}
\end{figure*}

\subsection{Sensitivity Analysis}
\label{app:ablation}
To evaluate the robustness of FLAME, we conduct ablation studies on the \textsc{Ant} benchmark, focusing on four critical hyperparameters: the number of flow steps ($N_{\text{gen}}$), entropy estimation steps ($N_{\text{est}}$), importance sample count ($K$), and the choice of proposal distribution ($h_t$).

\textbf{Effect of Number of Flow Steps ($N_{\text{gen}}$).} 
We investigate how the integration fidelity of the probability flow affects control performance. As illustrated in Figure \ref{fig:sensitivity}(a), although using fewer flow steps (e.g.,$N_{\text{gen}}=1$) exhibits higher variance during the early stages of training, FLAME-R effectively straightens the probability flow over time. This enables the policy to maintain high-fidelity control even with a single flow step at deployment. We utilize $N_{\text{gen}}=20$ during training.

\textbf{Effect of Entropy Estimation Steps ($N_{\text{est}}$).} 
We analyze the impact of the integration steps $N_{\text{est}}$ used in the augmented ODE for log-likelihood computation. As illustrated in Figure \ref{fig:sensitivity}(b), a single-step estimation ($N_{\text{est}}=1$) brings significant bias, leading to suboptimal asymptotic returns. Increasing $N_{\text{est}}$ effectively reduces this bias, with performance improvement significantly up to $N_{\text{est}}=10$. While $N_{\text{est}}=20$ yields similar results, it incurs higher computational cost; thus, we find $N_{\text{est}}=5 \sim 10$ to be the optimal trade-off for the critic's update.

\textbf{Effect of Importance Sample Count ($K$).} 
The Q-Weighted objective relies on a batch of $K$ candidate samples to approximate the optimal policy via importance sampling. Figure \ref{fig:sensitivity}(c) shows that performance is relatively robust for $K \in [50, 400]$. Extremely small values (e.g., $K=1$) fail to capture the target distribution, while excessively large values (e.g., $K=500$) do not yield further gains. We select $K=300$ as the default.

\textbf{Effect of Proposal Distribution ($h_t$).} 
We examine the impact of the proposal distribution $h_t(a_t|s)$ used in the Q-reweighted flow matching objective. According to Proposition \ref{prop:reweighting_invariance}, the set of global minimizers is invariant to the choice of any strictly positive weighting function. To verify this, we compared two variants: a \textit{Uniform Distribution} over the action space and the \textit{Last Policy} $\pi_{\text{old}}(\cdot|s)$. As shown in Figure \ref{fig:sensitivity}(d), both distributions show nearly identical performance. This empirical result confirms the theoretical robustness of our reweighting scheme, allowing for flexible choices of $h_t$ in practical implementations.
\end{document}